\Crefname{equation}{equation}{equations} 
\newcolumntype{P}[1]{>{\arraybackslash}m{#1}}
\begin{document}
\title{Metrics for quantifying isotropy in high dimensional unsupervised clustering tasks in a materials context}
\renewcommand*{\Affilfont}{\small\itshape}
\author[1,2]{Samantha~Durdy$^{\ast,}$}
\author[2,3]{Michael~W.~Gaultois}
\author[1,2]{Vladimir~Gusev}\author[1]{Danushka~Bollegala} 
\author[2,3]{ Matthew~J.~Rosseinsky}
\affil[1]{Department of Computer Science, University of Liverpool, Ashton Street, Liverpool, L69~3BX, UK}
\affil[2]{Leverhulme Research Centre for Functional Materials Design, University of Liverpool, 51 Oxford Street, Liverpool, L7~3NY, UK}
\affil[3]{Department of Chemistry, University of Liverpool, Crown St, Liverpool, L69~7ZD, UK}
\affil[ ]{$\ast$~E-mail: samantha.durdy@liverpool.ac.uk}


\maketitle

\begin{abstract}
Clustering is a common task in machine learning, but clusters of unlabelled data can be hard to quantify. The application of clustering algorithms in chemistry is often dependant on material representation. Ascertaining the effects of different representations, clustering algorithms, or data transformations on the resulting clusters is difficult due to the dimensionality of these data. 

We present a thorough analysis of measures for isotropy of a cluster, including a novel implantation based on an existing derivation. Using fractional anisotropy, a common method used in medical imaging for comparison, we then expand these measures to examine the average isotropy of a set of clusters. A use case for such measures is demonstrated by quantifying the effects of kernel approximation functions on different representations of the Inorganic Crystal Structure Database. Broader applicability of these methods is demonstrated in analysing learnt embedding of the MNIST dataset. Random clusters are explored to examine the differences between isotropy measures presented, and to see how each method scales with the dimensionality. Python implementations of these measures are provided for use by the community. 
\end{abstract}
\noindent{\it machine learning, unsupervised clustering, clustering, data science, materials science \/}

\section{Introduction}
Clustering algorithms have become a vital tool in materials science for tasks such as machine learning evaluation~\cite{Durdy2022,Roter2022} and data exploration~\cite{Zhang2019}. In the field of materials science, datasets can often be high-dimensional and lack target labels, making the task of clustering data a challenging one. The appropriate representation of a material is often unclear~\cite{Durdy2022, Murdock}, and manual evaluation of identified clusters is infeasible due to the size of the datasets~\cite{ICSD,matbench,materialsProject}. 

Recent publications in materials science have introduced new clustering techniques which can work in a supervised or semi supervised manner~\cite{ILS}. While new techniques show promise, and are seeing uptake~\cite{Roter2022,ILS1,ILS2,ILS3}, like many other clustering algorithms~\cite{kmeans,DBSCAN,wardClustering}, these techniques rely on distance similarity metrics (usually Euclidean distance) in order to cluster the data. Thus, these algorithms are dependant on the materials' representation. But, it is unclear which material representation is most appropriate, thus evaluating the results of clustering is important, regardless of which clustering algorithm is used. As datasets can be too large to evaluate manually, semi-automatic, or automatic metrics must be used to quantify characteristics of clusters, and the success of clustering. This paper discusses and presents such metrics.

Existing metrics for clustering unlabelled data, such as silhouette or Folkes--Mallows scores, focus on quantifying the compactness of individual clusters, and the separation of clusters from each other~\cite{Liu2010}. One aspect of sets of clusters, which remains difficult to quantify, is their average shape. In this paper we focus on the isotropy of a cluster - do the points in a cluster form a round shape, or are they a ``spikier" shape. While other shape markers like squareness or hexagonality may be given more importance, isotropy could still be a relevant factor to some researchers. 

Isotropy of data representations has been associated with improved performance in downstream machine learning tasks~\cite{isotropy}, and anisotropic clusters of data may be indicative of representations which are dominated by specific features or correlations of features. Where representation is unclear, such as materials science, exploring the effects of representation on the isotropy of clusters can be informative to a researcher. 

The shape of a cluster can be visually observed by projection of data onto a lower dimensional space~\cite{PCA,TSNE}, but this projection excludes significant amounts of information, and observations about these projections are subjective. For example, previous work used visual inspection of Principal Component Analysis (PCA) to project representations of the Inorganic Crystal Structure Database (ICSD) into 3 dimensions and qualitatively described changes in cluster shape upon application of different kernel approximation methods~\cite{Durdy2022}. Although qualitative observation can be helpful, there is an unmet need for robust metrics to quantify the isotropy of clusters and understand the underlying structure of the data. As will be seen, such numerical analysis can highlight non-intuitive results which are present at higher dimensions.

 In this report, we expand on existing methods to quantify cluster shape by measuring the isotropy (``roundness") or anisotropy (``spikiness"). We examine the robustness of existing metrics, and expand them from use on one cluster to use on a set of clusters. Metrics of this kind are commonly used in three dimensions in the field of medical imaging to identify the diffusion of water in the brain~\cite{fractional_Anisometry_book}. In higher dimensions, similar metrics arise in the context of data science to draw conclusions about the shape of point clouds~\cite{isotropy}. We provide an analysis of existing methods in high dimensions using random matrix theory and outline an alternative implementation for an existing derivation of isotropy. We extend the concept of measuring isotropy to examine sets of clusters, and provide example uses of this extension in the material science and data science domains. 
 
After a brief introduction to metrics for unlabelled clustering and metrics for isotropy, we expand an existing derivation to define our new isotropy metric. We demonstrate the usefulness of this new isotropy metric for quantifying the clustering of the ICSD, which is one of the foundational datasets in inorganic chemistry and materials science. These metrics of isotropy are used to quantify the shape of clusters within the ICSD when using different representations, and non-linear (kernel-approximation) transformations.

While this technique was developed in a material science context, it has a broader applicability wherever the shape of sets of clusters must be quantified. For example, analysing learnt embeddings is a common task in machine learning~\cite{ElMD,mat2Vec,PCA_projected} and often relies on low dimensional visualisation methods~\cite{PCA,TSNE}. As such, we further demonstrate the usefulness of this metric by using it to quantify differences in learnt embeddings of images of digits using the Modified National Institute of Standards and Technology (MNIST) dataset, a foundational data science dataset~\cite{mnist}. 

To examine the difference between the metrics for isotropy used, clusters of random points are generated in various dimensionalities and their isotropy is measured. Using mathematical tools from the field of random matrix theory~\cite{MarcenkoPastur} we are able to explain the behaviour observed in the existing metrics. Time complexity of the proposed metrics is examined and the advantages and disadvantages of each metric are discussed.

The specific contributions of this paper are as follows:
\begin{itemize}
    \item Exploring how metrics used for measuring isotropy in 3 dimensions~\cite{Fractional_Anisoptry} generalise to higher dimensions.
    \item Providing a new implementation for an isotropy measure based on an existing mathematical derivation (\Cref{alternatives}).
    \item Proposing adaptions to the measures of isotropy for single clusters such that one can measure the average isotropy across a set of clusters (\Cref{sets_of_clusters}).
    \item Highlighting the need for analysis of representation when clustering datasets relating to materials (\Cref{materials_section}).
    \item Demonstrating analysis of isotropy in a supervised learning context using a foundational data science dataset (\Cref{MNISTsection}). 
    \item Examining the robustness of the metrics under random noise perturbations (\Cref{dummy_data_experiments}). 
    \item Using random matrix theory to prove that the measurements of isotropy are related to the dimensionality of data, especially if the data are noisy (\Cref{random_matrix_theory}).
    
\end{itemize}

\section{Metrics for unsupervised clustering}
While supervised clustering tasks allow for the use of metrics such as the adjusted mutual information score~\cite{adjusted_mutual_info}, Folkes--Mallows score~\cite{Fowlkes1983}, or homogeneity and completeness scores~\cite{homogeniety}, the selection for unsupervised clustering is more sparse. Unsupervised metrics must rely on features present in the data, thus, are dependant on data representation. Because of this reliance, such metrics can be referred to as ``internal clustering validation measures''~\cite{Liu2010}. Internal clustering validation measures aim to quantify the quality of a set of clusters in an abstract sense, by focusing on either the compactness of each cluster or the separation between clusters. To contextualise the new implementations presented here, examples of prominent unsupervised clustering metrics are outlined in this section.
\par Using distance metrics such as the Euclidean distance, one can compute the average distance between each point and every other point in its cluster. This can become computationally expensive because the calculation of the pairwise distance matrix scales with the square of the number of points (\Cref{tab:metricComparison}). Thus, the average distance between a point in a cluster and its centroid can be used instead, which scales linearly with the number of points in the cluster. These computations provide a measure of how tightly packed a cluster is in the space distance is being measured over (for example, the Euclidean space). This is useful for numerically comparing clusters and clusterings of points which exist in the same space, such as comparing clusters found using different clustering algorithms. The representation of data, and any transformations will affect the distance measurements, so, this use of distance metric based quantification of clusters is inadequate for making comparisons between sets of clusters found on different representations of data.
\par The silhouette score uses distance measurements to provide a number bounded between -1 and 1 to measure how well a point is clustered~\cite{silhouetteScore}. Where 1 is considered a well clustered point (i.e., according to this metric the point is in the correct cluster) and -1 is considered a poorly clustered point (i.e., according to this metric this point should be in a different cluster). It is calculated by comparing the mean distance between a point and other members of that point's cluster, to the mean distance of that point and all members of its next closest cluster (the cluster who's members are on average closest to that point). By calculating the mean silhouette score for all points one can obtain a score for the quality of a set clusters. This score can be compared to silhuoette scores found using alternative representations of data points or kernel transformations. 
\par The Davies--Bouldin index provides a lower bounded metric of clusters without requiring a pairwise distance between points in dataset~\cite{Davies1979}. By comparing distances of points in a cluster to its centroid and distances between cluster centroids, a score is calculated with a minimum of 0, where lower scores indicate better clusterings. 
\par The Calinski--Harabasz measure (or variance ratio criteron)~\cite{Calinski1974} is a metric considering both the dispersion and separation of clusters. It is calculated using the sum of square distances between a point in a cluster and its centroid and the sum of square distances of the cluster's centroid from the mean data point in the dataset. In other words, it compares the dispersion within clusters to the dispersion of centroids in the representation space. Unlike the Davies--Bouldin index, a higher Calinski--Harabasz measure indicates a more separated set of clusters.
\par While evaluating based on dispersion and cluster compactness provides a metric as to how ``good'' an application of clustering is, it does not provide information about the clusters themselves. It can be pertinent to the use case of clustering algorithms to have evenly sized clusters (\textit{i.e.} clusters should contain approximately the same number of data points)~\cite{LOCOCV}. In this case, the variance between cluster size has been used as a metric~\cite{Durdy2022}.
\par Another property which has been difficult to reason with about clusters is their shape. We present a novel application of the isotropy metrics in a clustering context.

\begin{table}
\begin{threeparttable}
\caption{A summary of metrics for unsupervised learning. Included are descriptions, optimal values where applicable, whether their output has upper bounds and/or lower bounds ($L.B.$), and approximations of time complexity. Where $\mathcal{D}$ is a set of clusters, $|\mathcal{C}|$ is the size of a cluster, $\mathcal{E}$ is the set of all points in all clusters of $\mathcal{D}$ (thus $|\mathcal{E}| = \Sigma_{\mathcal{C}\in \mathcal{D}} |\mathcal{C}|$), $n$ is the number of dimensions in which the cluster exists and $r$ is the number of random vectors used for $I_{rnd}$.}\label{tab:metricComparison}
\centering
\begin{tabular}{p{3.2cm}p{6cm}p{1.5cm}p{1.7cm}c}
\hline
\hline
\noalign{\smallskip}
     Metric & Description  & Optimal value & Bounded & Complexity\\\hline
     \hline
     Mean distance to centroid &  Measures compactness of clusters & Min & $L.B. = 0$ & $O(|\mathcal{E}|)$\\
     \nohyphens{Mean distance between points in cluster} & Measures compactness of clusters & Min & $L.B. = 0$ & $O\left(|\mathcal{E}||\mathcal{C}|\right) $\\
     Silhouette score & Measures how close a point is to other points in its cluster compared to points in other clusters & Max & $-1,1$ & $O\left(|\mathcal{E}|^2\right)$\\
     \nohyphens{Davies--Bouldin index} & Ratio of within cluster distances to between cluster distances   & Min & $L.B. = 0$ & $O(|\mathcal{E}|)$~\tnote{*}\\
     Calinski--Harabasz measure & Ratio of between and within cluster dispersion, weighted by the size of the cluster & Max & $L.B. = 0$ & $O(|\mathcal{E}|)$\\
     Cluster size variance & Measures how evenly sized clusters are & N/A~\tnote{\textdagger}  & $L.B. = 0$ & $O(|\mathcal{D}|)$\\
     Fractional isotropy & Measures the shape of clusters &  N/A \tnote{\textdaggerdbl} & 0, 1 & $O(|\mathcal{E}|n^2)$~\tnote{\S}\\
     \nohyphens{Isotropy (EigenVec)} & Measures the shape of clusters & N/A~\tnote{\P} & $0,1$ & $O(|\mathcal{E}|n^2)$~\tnote{\S}\\
     \nohyphens{Isotropy (random)} & Measures the shape of clusters & N/A~\tnote{\P} & $0,1$ & $O(|\mathcal{E}|r)$\\
     \hline\hline

     \end{tabular}
 \begin{tablenotes}
 \item[*] Assuming $|D|>|D|^2$ otherwise $O(|D|^2)$.
 \item[\textdagger] Lower indicates more evenly sized clusters.
 \item[\textdaggerdbl] Lower indicates more isotropic.
 \item[\P] Higher indicates more isotropic.
 \item[\S]Assuming $n < |\mathcal{C}|$ else $O(|\mathcal{E}||\mathcal{C}|n)$.
 \end{tablenotes}
 \end{threeparttable}
\end{table}

\section{Metrics for Isotropy}
\begin{figure}
    \centering
    \begin{subfigure}{0.45\linewidth}
        \includegraphics[width=\linewidth]{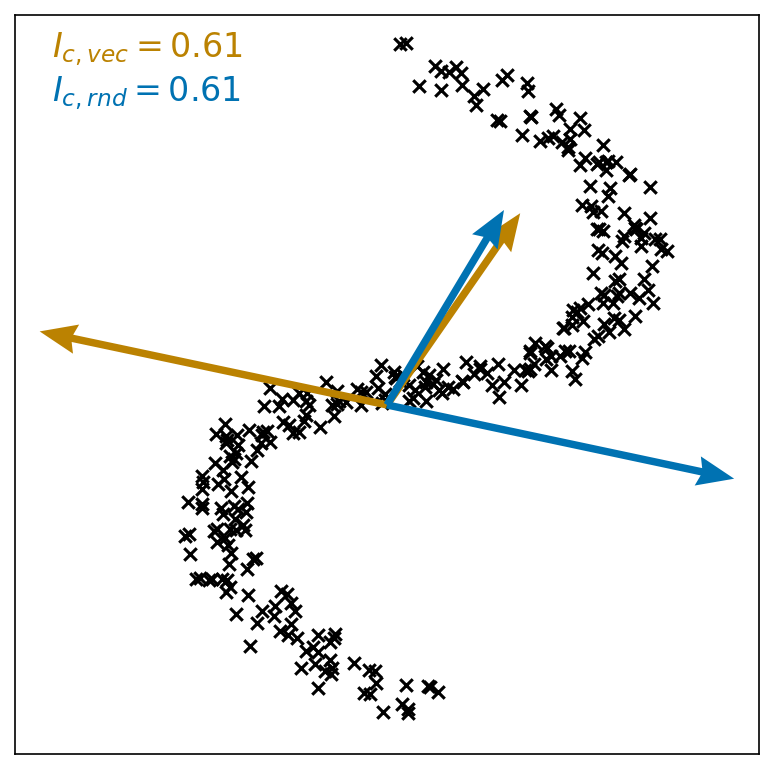}
        \caption{\centering}
        \label{fig:s_shaped_cluster}
    \end{subfigure}
    \begin{subfigure}{0.45\linewidth}
        \includegraphics[width=\linewidth]{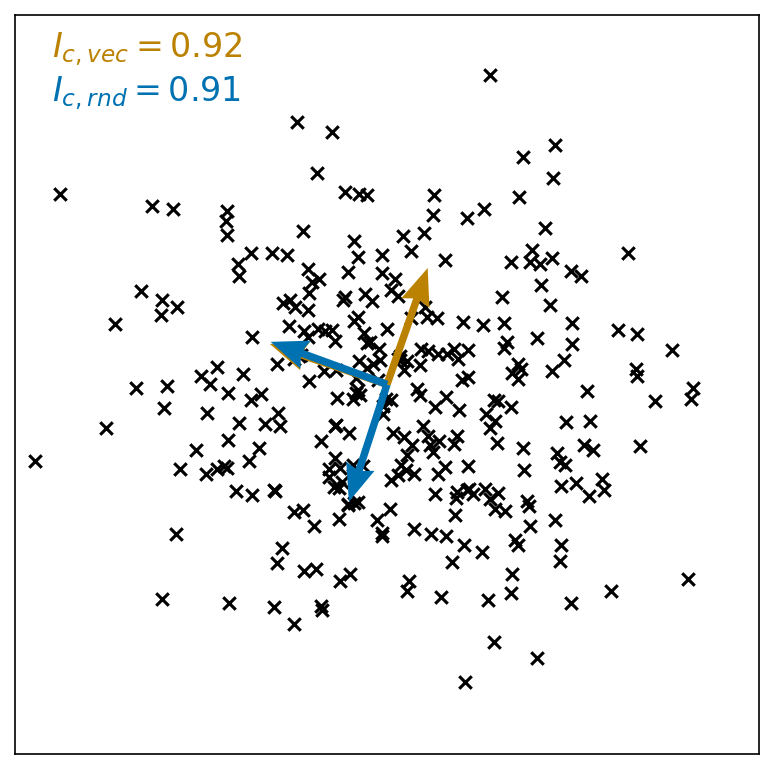}
        \caption{\centering}
        \label{fig:single_clusters}
    \end{subfigure}
    \begin{subfigure}{0.45\linewidth}
        \includegraphics[width=\linewidth]{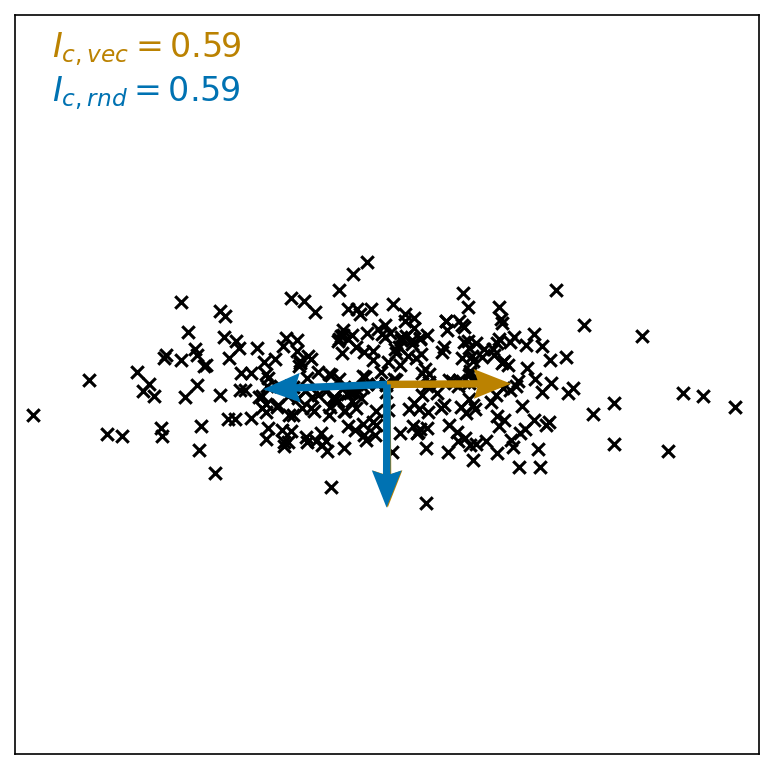}
        \caption{\centering}
        \label{fig:squashed_cluster}
    \end{subfigure}
    \begin{subfigure}{0.45\linewidth}
        \includegraphics[width=\linewidth]{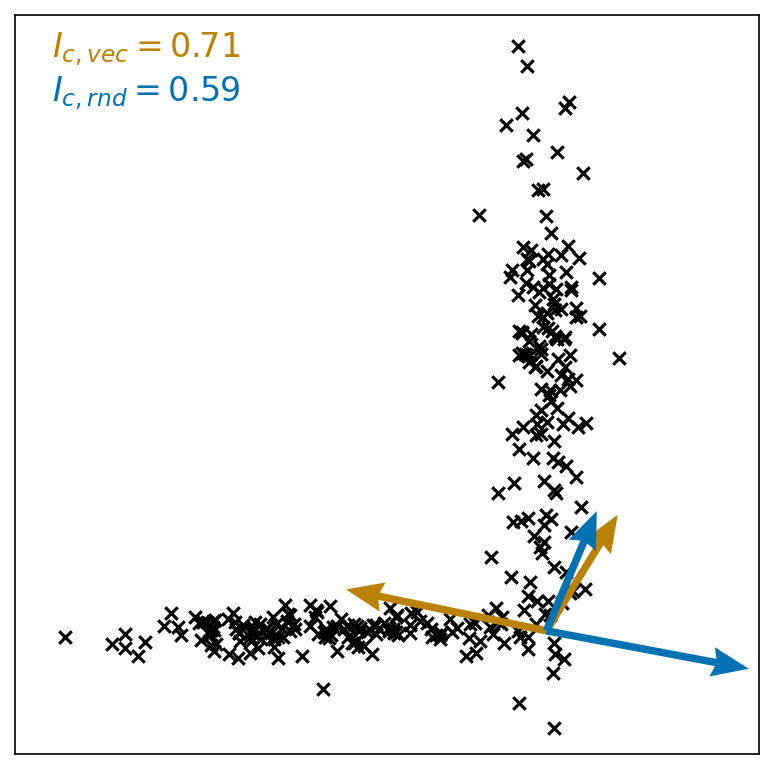}
        \caption{\centering}
        \label{fig:two_clusters}
    \end{subfigure}

    \caption{Examples of two dimensional clusters of points, labelled with measurements for $I_{c,\mathrm{vec}}$ and $I_{c,\mathrm{rnd}}$. Unit vectors $\mathbf{a}$ which resulted in $min(Z(\mathbf{a}))$ and $max(Z(\mathbf{a}))$ are shown, and colour coded according to the metric for which they have been used. Each cluster consists of 300 points. As was the case for most other low dimensional experiments (\Cref{dummy_data_experiments}), in all the examples shown here, $I_{c,\mathrm{rnd}}$ is seen to be lower than $I_{c,\mathrm{vec}}$ and thus is more accurate in these cases (Equation~\ref{eq:I_c_true_le_I_c}) (a) An s-shaped, or spiral cluster (b) A cluster picked from a two dimensional Gaussian distribution (c) A cluster picked from a two dimensional Gaussian distribution where the Y axis has a lower standard deviation than the X axis (d) A reverse L shaped cluster.}
    \label{fig:cluster_examples}
\end{figure}
As discussed above, often the shape of the clusters is an important consideration. A cluster with an isotropic shape, where the distribution of points is roughly equal in all directions, can be preferable to a cluster with a highly elongated shape~\cite{Zmeasure, isotropy}. Elongated clusters could be indicative of outliers: in clustering methods which only consider single linkage when creating clusters (\textit{e.g}, hierarchical agglomerative clustering~\cite{hierarchicalclustering} or iterative label spreading~\cite{ILS}), long chains of outliers may be grouped together~\cite{hierarchicalclustering}. In this section, we will discuss possible pitfalls when attempting to measure isotropy, and explore various metrics for measuring the isotropy of clusters, providing a quantitative way to evaluate and compare different cluster shapes.

One common pitfall when analysing the isotropy of a cluster of points is that it can be highly subjective as to whether a cluster is anisotropic or isotropic. For example, a spiral cluster (\Cref{fig:s_shaped_cluster}) may appear anisotropic if the spirals are loose. However, as spirals become closer together, or longer it becomes more subjective as to whether this cluster can be considered anisotropic or if the spirals have collapsed into a single isotropic cluster. Similarly an L-shaped cluster (\Cref{fig:two_clusters}) may seem anisotropic, but could arguably be two isotropic clusters which have been wrongly grouped.  Reducing complex correlations to single numbers will necessarily remove some information. Isotropy may be a secondary consideration compared to other markers of shape (for example, spiral, square, or hexagonal). Nevertheless, when isotropy is a primary concern for a researcher, having the ability to quantify isotropy is useful for analytical and descriptive purposes.

While metrics such as kurtosis or variance may measure the spread of a cluster in individual dimensions, anisotropy may occur between dimensions (\Cref{fig:two_clusters}) thus, a more complex statistical analysis must be used. Two important properties for a measure of isotropy to have, are \textit{(a)} being invariant against uniform scaling and \textit{(b)} being invariant against linear isometries. In other words applying linear transformations such as translations, reflections, rotations or uniform scaling to a cluster of points should not change its isotropy measurement. Confusingly, in the field of probability theory, functions which satisfy the property of being invariant under linear isometries are sometimes referred to as being ``isotropic measures'' or ``isotropic processes''~\cite{isotropic_measures}, not to be confused with the measures for isotropy or metrics for isotropy which are explored in this paper.

A simple way to incorporate invariance upon linear isometries as a metric for isotropy is to base that metric on the principal components of a cluster. Whilst the eigenvectors that make up these principal components may change with rotation, their relationship with points in the cluster and the set of eigenvalues ($\Lambda$) associated with each eigenvector will not change. Using the variance of these eigenvalues would be a simple proxy for isotropy. If the variance between eigenvalues is large, there are large differences between the eigenvalues, then there are principal components that are more significant, and thus a cluster will become anisotropic. If the variance between the eigenvalues is small, then the eigenvalues are similar and the cluster extends evenly in each of the principal axes. If the eigenvalues are normalised before measuring the variance (Var), then this proxy is invariant upon uniform scaling and invariant upon linear isometries. These normalised eigenvalues are usually denoted as the set $\lambda$:
\begin{eqnarray}\label{eq:normalised_lambda}
\lambda_i = \frac{\Lambda_i}{\sum_{j=0}^n \Lambda_j}    
\end{eqnarray}

Where $\Lambda$ are the eigenvalues of the principle components of a cluster. As uniform scaling of points in the cluster will equally scale $\sum_{j=0}^n \Lambda_j$, Var($\lambda$) is a simple measure for isotropy which is invariant upon linear isometries and upon uniform scaling.

While this means that $\mathrm{Var}(\lambda)$ is theoretically bounded between 0 and 0.25 (\Cref{thm:varianceProof}), $\mathrm{Var}(\lambda)$ is usually very small and does not use the whole of this range (\Cref{tab:ICSDMeasures,tab:MNISTMeasures}). Consequently, Var$(\lambda)$ is not a common measure of isotropy and will not be focused on here.

Similar measures for isotropy based on eigenvalues of the principal components are widely used in the field of diffusion tensor imaging. In diffusion tensor imaging, isotropy can be used as a proxy for the flow of water in the brain or spinal chord~\cite{fractional_Anisometry_book}. While many metrics for isotropy have been proposed in the medical imaging field~\cite{Alexander2007}, the most widely used of these is the Fractional Anisotropy (FA)~\cite{Fractional_Anisoptry}. FA is defined as the square root of the variance of the normalised eigenvalues of a covariance matrix, divided by the expected value of the square of the normalised eigenvalues, as given by Equation \ref{eq:FA_deffinition}. Due to its application in magnetic resonance imaging, this is often defined in three dimensions:

\begin{eqnarray}
\mathrm{FA}(\lambda) &= \sqrt{\frac{3}{2}}\sqrt{\frac{\left(\lambda_1 - \hat{\lambda}\right)^2+\left(\lambda_2 - \hat{\lambda}\right)^2+\left(\lambda_3 - \hat{\lambda}\right)^2}{\lambda_1^2+\lambda_2^2+\lambda_3^2}} \\[0.15cm]\label{eq:FA_deffinition} 
&= \sqrt{\frac{\mathrm{Var}(\lambda)}{\mathrm{E}\left(\lambda^2\right)}} = \sqrt{1-\frac{E(\lambda)^2}{E(\lambda^2)}}
\end{eqnarray}

Where $\hat{\lambda}$ is the mean of $\lambda$ ($\hat{\lambda} = (\sum_{i=0}^n \lambda)/n$). FA is bounded between 0 and 1 with 1 indicating a highly anisotropic cluster and 0 indicating a highly isotropic cluster. In medical imaging $\lambda$ is usually the set of normalised eigenvalues for the principal components of a diffusion tensor. This diffusion tensor is a small voxel of a larger medical image allowing for mapping of water diffusion in parts of the brain~\cite{fractional_Anisometry_book}. We are unaware of any higher-dimensional applications of this metric or applications on larger sets of points. We investigate the use of FA for quantifying anisotropy in larger point clouds and in higher dimensions. As will be seen shortly, special considerations are needed when using FA in higher dimensions.

While FA is popular in medical imaging, an alternative approach to measuring isotropy of a cluster was proposed in the natural language processing domain~\cite{isotropy}. This research aimed to quantify changes to high-dimensional word embeddings. This research used a previously defined function to quantify the cosine similarity between a vector and a cluster of points~\cite{Zmeasure}:
\begin{eqnarray}\label{eq:Z}
    Z(\mathbf{a}) = \sum_{\mathbf{d} \in \mathcal{C}}\exp\left(\mathbf{a}^\intercal \mathbf{d}\right)
\end{eqnarray}

In an isotropic cluster, $\mathcal{C}$, of data points, $\mathbf{d}$, the value of $Z(\mathbf{a})$ should be approximately constant with any unit vector $\mathbf{a}$. The ratio between the largest and smallest values of $Z(\mathbf{a})$ for a cluster $\mathcal{C}$ can be used to define an isotropy measurement, $I_c$, with a range between 0 and 1. The true ratio of largest to smallest values of $Z(\mathbf{a})$, would be calculated using every $\mathbf{a}$ on the unit sphere~\cite{isotropy}:
\begin{eqnarray} \label{eq:variant_isotropy}
\frac{\min_{|\mathbf{a}|=1}Z(\mathbf{a})}{\max_{|\mathbf{a}|=1}Z(\mathbf{a})}
\end{eqnarray}

However, this definition is not invariant under linear isometries or uniform scaling: moving a cluster away from the origin will result in a smaller measure for isoptropy (\Cref{thm:invarianceProof}). In order to make this definition invariant upon linear isometries and uniform scaling, we adjust the $Z(\mathbf{a})$ function used in previous work:

\begin{eqnarray}\label{eq:ZPrime}
    Z'(\mathbf{a}) = \sum_{\mathbf{d} \in \mathcal{C}}\exp\left(\mathbf{a}^\intercal \left(\frac{\mathbf{d-\hat{d}}}{\mu}\right)\right)
\end{eqnarray}
$\mathbf{\hat{d}}$ is the centroid (or mean) of $\mathcal{C}$ and $\mu$ is the mean magnitude of $\mathbf{d-\hat{d}}$: 
\begin{eqnarray}\mu=\frac{\sum_{\mathbf{d}\in \mathcal{C}}{|\mathbf{d-\hat{d}}|}}{ |\mathcal{C}|}
\end{eqnarray}

This allows the adjustment definition of a measure, $I_c$, for the isotropy of a cluster, which is bounded between 0 and 1, and invariant upon linear isometries and uniform scaling:

\begin{eqnarray}\label{eq:I_c_true}
    I_{c,\mathrm{true}} = \frac{\min_{|\mathbf{a}|=1}Z'(\mathbf{a})}{\max_{|\mathbf{a}|=1}Z'(\mathbf{a})}
\end{eqnarray}

 While linear isometries applied to $\mathcal{C}$ may change $Z'(\mathbf{a})$ for a single value of $\mathbf{a}$, intuitively it will not change the value of $I_{c,\mathrm{true}}$. Note that for $I_{c,\mathrm{true}}$ (and its non-invariant counterpart in Equation~\ref{eq:variant_isotropy}) an isotropic cluster will result in a measurement close to 1 and an anisotropic cluster will be close to 0. This is the opposite to the to FA which is 1 for anisotropic clusters and 0 for isotropic clusters.
 
As the set of vectors on the unit sphere is infinite, previous work~\cite{isotropy} approximated $I_{c,\mathrm{true}}$ by measuring $Z$ for the set eigenvectors found in PCA. As we will later propose an alternative approximation (\Cref{alternatives}), for clarity we label this implementation of $I_c$ as $I_{c,\mathrm{vec}}$. $I_{c,\mathrm{vec}}$ (adjusted for invariance under scaling and linear isometries) is thus defined by:
\begin{eqnarray}\label{eq:I_cvec}
    I_{c,\mathrm{vec}}(C)\approx \frac{\min_{\mathbf{a}\in A}  Z'(\mathbf{a})}{\max_{\mathbf{a}\in A}  Z'(\mathbf{a})}
\end{eqnarray}
where $\mathbf{a}$ is the set of eigenvectors found by applying SVD to $\mathcal{C}^\intercal \mathcal{C}$. Readers familiar with PCA will note that this is the same process by which PCA is calculated (though PCA here will be applied to the cluster). Thus $\mathbf{a}$ is the set of eigenvectors that are the principal components of the cluster. 
\subsection{Alternative interpretation of isotropy definition}\label{alternatives}
Much like FA, $I_{c,\mathrm{vec}}$ assumes that isotropy originates from the principal axis of a cluster. As seen empirically, this is often a valid assumption (\Cref{tab:ICSDMeasures,tab:MNISTMeasures,fig:dimVsPerformance}), but it is possible to think of clusters for which this is clearly not the case (\Cref{fig:two_clusters}). 

We approach the task of finding the set $\mathcal{B}$ for which we measure $Z'(\mathbf{a})$ as an optimisation task. The set of points on the unit sphere is of infinite size, thus, a subset of unit vectors $\mathcal{B}$ must be defined for which to calculate $\forall_{\mathbf{a}\in \mathcal{B}}Z'(\mathbf{a})$: 
\begin{eqnarray}\label{eq:I_c_given_B}
        I_{c|\mathcal{B}}(\mathcal{C})\approx \frac{\min_{\mathbf{b}\in \mathcal{B}}  Z'(\mathbf{b})}{\max_{\mathbf{b}\in \mathcal{B}}  Z'(\mathbf{b})}
\end{eqnarray}
A good approximation of $I_{c,\mathrm{true}}$ must not incur excess computation and must be accurate. The computational complexity can be estimated theoretically and stated using the ``big O notation'' (examples in ~\Cref{tab:metricComparison}) or can be measured experimentally (\Cref{fig:timeVsDims}). Assessing which set $\mathcal{B}$ provides the most accurate $I_{c|\mathcal{B}}$ is also straightforward. Given our definition of $I_{c,\mathrm{true}}$ (\ref{eq:I_c_true}), $I_{c|\mathcal{B}}$ will always be an upper bound for the true value of $I_{c,\mathrm{true}}$ (~\Cref{thm:upperBoundProof}):
\begin{eqnarray}\label{eq:I_c_true_le_I_c}
    \forall_{c,\mathcal{B}}: I_{c,\mathrm{true}}\le I_{c|\mathcal{B}}
\end{eqnarray}
From this we can conclude that, given two sets of vectors to constitute $\mathcal{B}$, the one for which $I_{c|\mathcal{B}}$ is smaller will be the more accurate of the two. Thus, this task is framed as finding a set $\mathcal{B}$, which will result in the smallest value of $I_{c|\mathcal{B}}$, while taking into account the incurred computational costs.

As a possible solution, we propose using a random set of unit vectors, $r$, to define the set $\mathcal{B}$. We label this solution $I_{c,\mathrm{rnd}}$, which can be defined as:
\begin{eqnarray}\label{eq:I_c_rnd}
        I_{c,\mathrm{rnd}}(\mathcal{C})\approx \frac{\min_{\mathbf{b}\in r}  Z(\mathbf{b})}{\max_{\mathbf{b}\in r}  Z(\mathbf{b})}
\end{eqnarray}
Here, $r$ is a random set of unit vectors. 

While $I_{c,\mathrm{rnd}}$ is non-deterministic, the random set $r$ can be chosen \textit{a priori} and used to calculate $I_{c,\mathrm{rnd}}$ for multiple clusters with the same dimensionality. As in calculations of $I_g$, we must calculate $I_c$ for many different clusters (Equations \ref{eq:I_gvec} and \ref{eq:I_grnd}). This means that in many circumstances it is more computationally efficient to calculate $I_{g,\mathrm{rnd}}$ than $I_{g,\mathrm{vec}}$ when $\mathcal{B}$ is pre-calculated (\Cref{tab:metricComparison},~\Cref{fig:timeVsDims}).  

The stochastic nature of this equation means that found values of $I_{c,\mathrm{rnd}}$ will slightly differ for different values of $r$. However, since the number of random unit vectors sampled is a hyper parameter of $I_{c,\mathrm{rnd}}$, the ability of $I_{c,\mathrm{rnd}}$ to approximate $I_{c,\mathrm{true}}$ can be improved by sampling more random unit vectors. As per~\Cref{thm:I_c_rnd_lim}, we can find: 
\begin{eqnarray}\label{eq:I_c_rnd_lim_I_c_true}
    \lim_{|r|\to \infty} I_{c,\mathrm{rnd}} = I_{c,\mathrm{true}}
\end{eqnarray}
We compare $I_{g,\mathrm{rnd}}$ to $I_{g,\mathrm{vec}}$ in two different contexts, to show an example use in the materials science domain, as well as a basic data science example to show the broader applicability.
\subsection{Isotropy of sets of clusters}\label{sets_of_clusters}
Both internal clustering validation metrics, and the measures for isotropy explored here rely on features of data to produce a numeric measurement. In order to adapt measurements of isotropy into an internal clustering validation metric, we extend these measurements to be defined globally for a set of clusters, $\mathcal{G}$, rather than a single cluster of points.
To adapt $I_{c,\mathrm{vec}}$ and $I_{c,\mathrm{rnd}}$ to estimate average $I_{c,\mathrm{true}}$ for a $\mathcal{G}$, a weighted sum of isotropy for each cluster can be taken to establish a measure for a global set of clusters, $I_g$ (weighted by the number of data points in a cluster). Thus $I_{g,\mathrm{vec}}$ can be defined as:
\begin{eqnarray}\label{eq:I_gvec}
    I_{g,\mathrm{vec}}(\mathcal{G}) \approx \frac{1}{\left|E\right|}\sum_{\mathcal{C}\in \mathcal{G}} |\mathcal{C}|I_{c,\mathrm{vec}}(\mathcal{C})
\end{eqnarray}
Where $E$ is the set of all points in the dataset ($|\mathcal{E}| = \sum_{C \in D}|\mathcal{C}|$, where $|\mathcal{C}|$ is the number of points in $\mathcal{C}$). 

Similarly we can define $I_{g,\mathrm{rnd}}$:
\begin{eqnarray}\label{eq:I_grnd}
 I_{g,\mathrm{rnd}}(\mathcal{G}) \approx \frac{1}{\left|\mathcal{E}\right|}\sum_{\mathcal{C}\in \mathcal{G}} |\mathcal{C}|I_{c,\mathrm{rnd}}(\mathcal{C}) 
\end{eqnarray}
 Both $I_{c,\mathrm{rnd}}$ and $I_{c,\mathrm{vec}}$ are bounded between 0 and 1 where 0 represents a set of clusters which are anistropic, and 1 represents a set of clusters which are isotropic.

 FA can also be adapted for a set of clusters $\mathcal{G}$. Taking the weighted sum of FA measurements for each cluster allows us to define $\mathrm{FA}_g$:
 \begin{eqnarray}\label{eq:FA_g}
 \mathrm{FA}_g(\mathcal{G}) = \frac{1}{\left|\mathcal{E}\right|}\sum_{\mathcal{C}\in \mathcal{G}} |\mathcal{C}|\mathrm{FA}(\mathcal{C}) 
\end{eqnarray}
$\mathrm{FA}_g$ is bounded between 0 and 1, with 0 representing a highly isotropic set of clusters and 1 representing a highly anisotropic set of clusters.
 
\section{Results}
\subsection{Use of isotropy measurements in the context of materials science}\label{materials_section}
To investigate the behaviour of $\mathrm{FA}_g$, $I_{g,\mathrm{vec}}$, and $I_{g,\mathrm{rnd}}$, we examine two potential use cases for these measures. We do this using a canonical dataset of crystal structures (\textit{e.g.}, the ICSD), and later in a more general context using a canonical data science dataset (\Cref{MNISTsection}). 

This exploration extends previous work~\cite{Durdy2022}, which applied \textit{K}-means clustering to prominent representations of the ICSD before and after the application of radial basis function (RBF) approximation~\cite{randomKitchenSinks}. This work qualitatively observed that clusters of chemical compositions in the ICSD were more isotropic after application of RBF approximation. This observation was made by visually inspecting 3 dimensional PCA projections of these high dimensional representations. These PCA projections inherently remove some of the information present in higher dimensions. Using the measures for isotropy of a set of clusters presented here, we are able to quantify the changes in cluster shape and ensure that all dimensions of a representation are considered.

In this work, we represent the ICSD using two popular composition based representations: a fractional composition vector ($compVec$) encoding~\cite{elemnet} and the $magpie$ composition based feature vector~\cite{Ward2016}. $compVec$ is an n-hot style encoding of composition where each entry in the vector corresponds to an element, the value of the entry represents the molar proportion of that element in a material. Thus, $compVec$ is a sparse representation, with each entry being between 0 and 1, and the sum of all entries being equal to 1. $magpie$ is a feature engineered vector, using features such as the covalent radius, electronegativity, or Mendeleev number observed for each element of a composition. The features are then aggregated using weighted mean, sum, variance, and range. As such, $magpie$ is a dense representation, with the range of features varying significantly (\Cref{diverenceInMeanDistToCentroid}), and many features being highly correlated~\cite{Durdy2022}. 

Min-Max scaling was used for both representations, to transform the feature values to be between -1 and 1 before these data were clustered using $K$-means clustering~\cite{kmeans} (values of $K$ set to 5 and 10). The RBF approximation was then applied \textit{a priori} to the data being clustered with $K$-means clustering (values of $K$ set to 5 and 10).

Previous work used RBF and RBF approximation interchangeably~\cite{Durdy2022}, despite them being distinct techniques~\cite{primerOnKernels}. The RBF provides an alternative distance metric to Euclidean distance for measuring the distance between points, while RBF approximation transforms points to emulate this distance metric while still using Euclidean distance. For brevity, we follow this convention in figures, but specify whether approximation is used in the text. We numerically measure the changes observed in previous work to validate visual findings and observe differences between measures of isotropy, $\mathrm{FA}_g$, $I_{g,\mathrm{rnd}}$ and $I_{g,\mathrm{vec}}$.

Visual inspection of PCA projections shows that applying an RBF approximation~\cite{randomKitchenSinks} to the ICSD creates more isotropic clusters (\Cref{fig:PCA}). However, due to the limitations in the visualisation, it is difficult to determine which representation leads to the largest changes upon application of the RBF approximation. The use of internal clustering validation metrics, including the isotropy measures developed here, allows us to quantify changes in the clusters (\Cref{tab:metricComparison}). 

Examining how metrics change with RBF approximation also allows reflection on which metric may be most useful depending on the required application. For example, cluster size variance was used in previous work, where more evenly sized clusters were sought to reduce data imbalance for training machine learning models~\cite{Durdy2022}. However, more evenly sized clusters are not necessarily well separated, thus other applications seeking to measure the distinctness of clusters may favour different internal cluster validation methods such as the Davies--Bouldin or silhouette scores. In the task of quantifying the anisotropy (\textit{i.e.}, spikeyness) of clusters, no existing internal validation methods were suitable. Thus, the methods presented here are pertinent in this case.

As was initially expected, applying RBF approximation has a consistent effect on most existing measurements in both investigated representations. The only internal cluster validation metrics in which $compVec$ and $magpie$ exhibited divergent behaviour with the RBF approximation were the mean Euclidean distance to the centroid, and measures for isotropy $\mathrm{FA}_g$, $I_{g,\mathrm{rnd}}$ and $I_{g,\mathrm{vec}}$ (\Cref{tab:ICSDMeasures}). We offer an explanation for each of these. The measures for isotropy introduced here ($\mathrm{FA}_{g}$, $I_{g,\mathrm{rnd}}$ and $I_{g,\mathrm{vec}}$) display unique behaviour from other internal cluster validation metrics. $I_{g}$ were the only metrics for which applying the RBF approximation $magpie$ and $compVec$ representations offered divergent behaviour regardless of whether measurements were taken after normalising $magpie$. 

While there are apparent divergent behaviours in the mean Euclidean distance between a point in a cluster and its centroid, this divergence can be explained by a lack of normalisation in the $magpie$ feature vector. We examine the formula of the RBF approximation applied in order to demonstrate this.

\subsubsection{Apparent divergent behaviour in mean Euclidean distance to centroid}\label{diverenceInMeanDistToCentroid}
When measuring the mean distance between a point in a cluster and its centroid after the RBF approximation, a $magpie$ representation without normalisation exhibits behaviour opposite to that of $compVec$. This behaviour can be explained by examining the formula for the radial basis function approximation:
\begin{eqnarray}
f(x) = \frac{ \sqrt{2} \cos(x\cdot w + o)}{\sqrt{l}} \label{eq:RBF} 
\end{eqnarray}
$l$ is length of vector. $w$ and $o$ are random weights, where $w$ is mean 0 variance $\sqrt{2 \gamma}$ and $o$ is uniform between 0 and 2 $\pi$.

The input to this function is randomly projected and translated, and then put through a cosine function before being scaled relative to the length of the input. 

In high dimensions, this random projection approaches a linear projection~\cite{Ritter1989,Kaski1998}. This linear projection maps into a data space in which distance relationships between points are preserved (with some error margin which scales inversely to the number of dimensions). If the inputs to this function are not normalised, there could be large differences between the scale of different axes in the new data space, which will be present after random projection. However, as the cosine function is bounded [-1,1], these large differences in scale of dimensions will be removed, akin to normalisation (\Cref{tab:ICSDMeasures}).

Unlike $compVec$, $magpie$ is not a normalised representation. Min-Max scaling was performed on the both representations before application of $K$-means clustering, however metrics were measured with this scaling (as this scaling would effect these measurements). For example, one of the features used in $magpie$ is the sum of the melting temperatures of the constituent elements, which ranges from 10~Kelvin to 43 million Kelvin. This large range means that the magnitude of $magpie$ feature vectors are on average very large (\Cref{tab:ICSDMeasures}), which results in large mean distances between a point in a cluster and that clusters centroid. After the RBF approximation the mean magnitude of the representation decreases, so in turn the mean distance from the centroid, this is in keeping with the RBF approximation having a bounded output domain (Equation~\ref{eq:RBF}). 

As all values in a $compVec$ representation range from 0 to 1, the mean magnitude is smaller (0.66). Applying an RBF approximation results in a mean magnitude which is larger, and in turn a larger mean Euclidean distance between a point and its centroid. Thus, the divergence of behaviour between representations on application of an RBF approximation is due to changes in the magnitude of a representation that this function entails.

\subsubsection{Divergent behaviour between representations for measurements of isotropy}
Applying an RBF approximation to all chemical compositions of the ICSD changes measurements of anisotropy, depending on the representation of those compositions (\Cref{tab:ICSDMeasures}). The $magpie$ representation is measured to be more isotropic after application of an RBF approximation, whereas the $compVec$ is more anisotropic. This difference in behaviour is due to differences in the sparsity of the representations. While $magpie$ is a dense representation, $compVec$ is a sparse representation, which has important consequences, discussed below.

Unlike the mean distance to the centroid, the divergence in the effect of the RBF approximation on isotropy cannot be explained by the bounded nature of the function (Equation~\ref{eq:RBF}). Even when measuring $I_{g,\mathrm{rnd}}$ and $I_{g,\mathrm{vec}}$ of this clustering when $magpie$ is normalised (with Min-Max scaling), the isotropy increases after the RBF approximation, whereas isotropy decreases in a $compVec$ representation (\Cref{tab:ICSDMeasures}).

As $compVec$ is a sparse representation, the projection seen in the RBF approximation will result in a sparse output. However, as this is over fewer dimensions, non-zero values in single axis have a greater effect on the isotropy of a cluster, leading to more anisotropic clusters.

As $magpie$ is a dense representation, the opposite is true. Variations in any one dimension will be diluted, leading to more isotropic clusters.

This behaviour is not necessarily obvious when looking at the transformation performed to both representations (Equation~\ref{eq:RBF}). In this case, the measures for isotropy $I_{g,\mathrm{vec}}$, $I_{g,\mathrm{rnd}}$, and $\mathrm{FA}_g$ are valuable to measure an effect and help identify changes that may not be intuitive.

\begin{table}[]
\caption{Analysis of entire ICSD in $compVec$ and $magpie$ representations before and after RBF approximation. Values here are means of those found with $K$-means clustering aplied with $K=5$ and $K=10$}
    \label{tab:ICSDMeasures}
    \centering


\begin{tabular}
{p{3cm}S[table-format=1.2e+2,table-column-width=1.7cm, tight-spacing=true,]S[table-format=1.2e+2,table-column-width=1.7cm, tight-spacing=true,]S[table-format=1.2e+2,table-column-width=1.7cm, tight-spacing=true,]S[table-format=1.2e+2,table-column-width=1.7cm, tight-spacing=true,]S[table-format=-1.2e+2,table-column-width=2.1cm, tight-spacing=true,]S[table-format=1.2e+2,table-column-width=1.7cm, tight-spacing=true,]}
\hline\hline
\multirow{2}{*}{Metric}& \multicolumn{2}{c}{compVec} & \multicolumn{2}{c}{magpie} & \multicolumn{2}{c}{magpie (normalised)} \\
 &    \multicolumn{1}{r}{No RBF} &    RBF  &     \multicolumn{1}{r}{No RBF} &    RBF  &                \multicolumn{1}{r}{No RBF} &    RBF  \\
\hline\hline
mean magnitude of representation ($l^2$-norm) & 0.660 & 1.01 & 8.21e+5 & 1.00  & 7.59 &  1.00\\[1cm]
mean distance to centroid &    0.397 &     0.52 & 2.33e+05 &    0.987 &                        2.91 &     6.98 \\ [0.45cm]
silhouette                 &    0.247 &    0.177 &    0.559 &  6.42e-3 &              -2.06e-2 &  6.34e-3 \\ [0.2cm]
Davies--Bouldin            &     1.91 &     2.58 &    0.515 &     8.33 &                        7.45 &     8.33 \\ [0.2cm]
Calinski--Harabasz         & 2.09e+04 & 1.88e+04 & 7.11e+05 &      8.53e2 &                    2.97e+03 &      8.53e2 \\ [0.45cm]
cluster size variance     & 9.26e+08 & 6.33e+08 & 6.34e+08 & 3.15e+05 &                    6.34e+08 & 3.15e+05 \\ [0.45cm]
fractional anisotropy & 0.855 & 0.870 & 0.994 & 0.182 & 0.951 & 0.182\\ [0.45cm]
$\mathrm{Var}(\lambda)$ & 3.33e-4 & 4.96e-4 & 1.08e-2 & 3.54e-6 & 1.24e-3 & 3.53e-6\\ [0.2cm]
$I_{g,\mathrm{vec}}$         &    0.942 &    0.923 &     0.180 &    0.993 &                       0.894 &    0.993 \\[0.2cm]
$I_{g,\mathrm{rnd}}$              &    0.992 &    0.988 &    0.682 &    0.999 &                       0.981 &    0.999 \\
\hline\hline

\end{tabular}

\end{table}

\begin{figure}
    
    \centering
    \begin{subfigure}{0.48 \textwidth}
    
    \includegraphics[width=\linewidth,trim=120 100 150 140,clip]{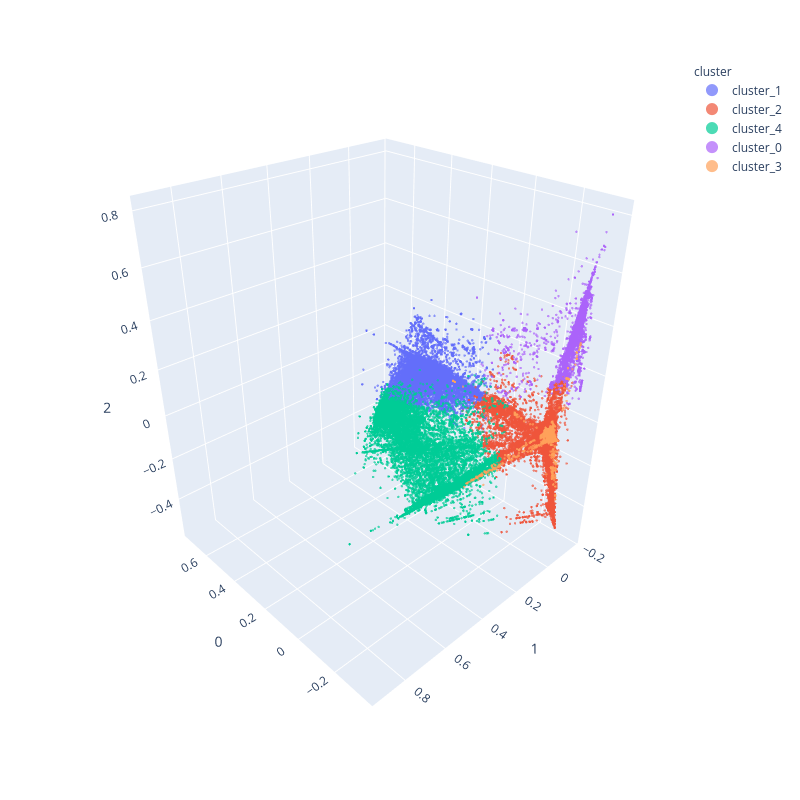}
    \caption{$CompVec$}
    \end{subfigure}
    \hfill
    \begin{subfigure}{0.48 \textwidth}
    
    \includegraphics[width=\linewidth,trim=120 100 175 155,clip]{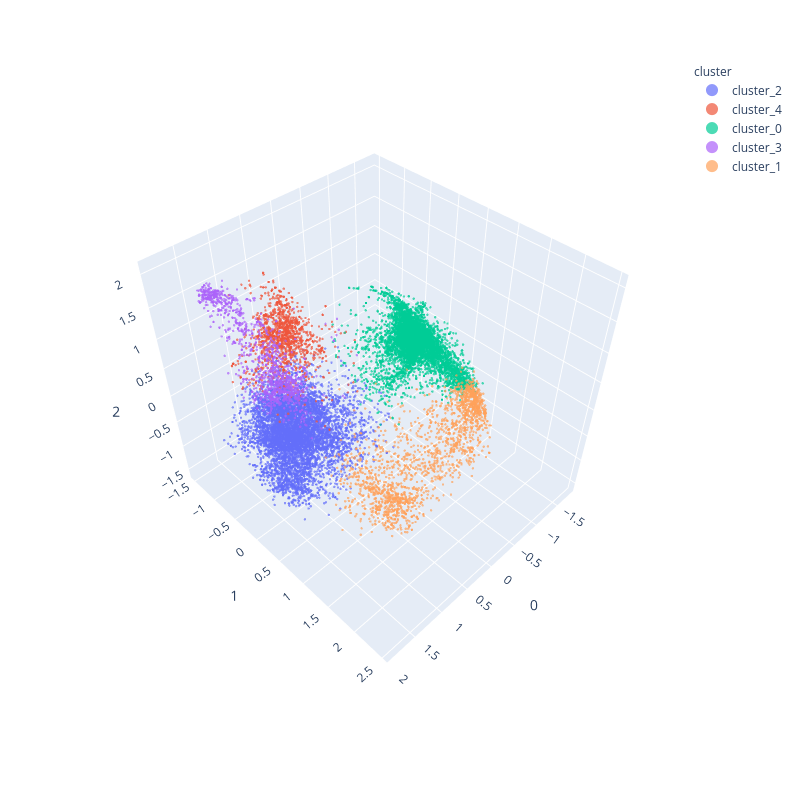}
    \caption{$CompVec$ with RBF}
    \end{subfigure}
    \begin{subfigure}{0.48 \textwidth}
    \includegraphics[width=\linewidth,trim=90 90 130 110,clip]{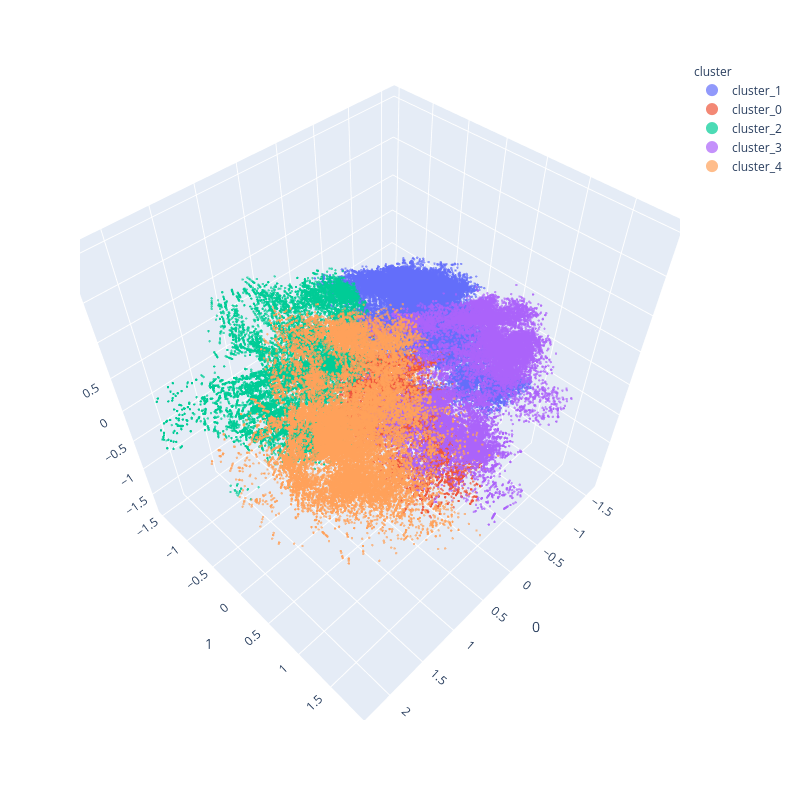}
    \caption{$Magpie$}
    \end{subfigure}
    \hfill
    \begin{subfigure}{0.48 \textwidth}
    
    \includegraphics[width=\linewidth,trim=110 100 130 130,clip]{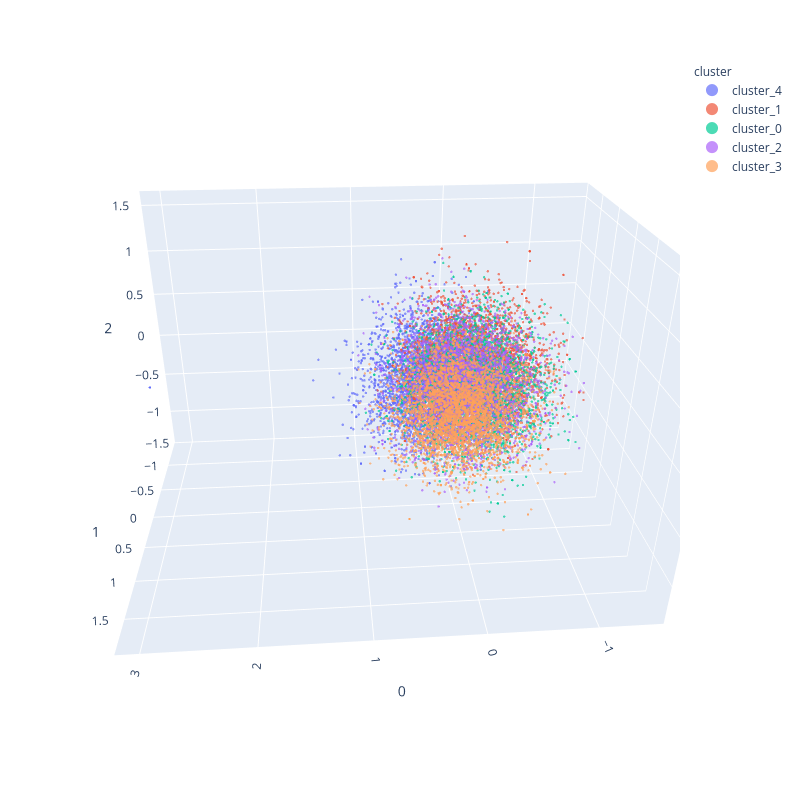}
    \caption{$Magpie$ with RBF}
    \end{subfigure}
    \caption{3 dimensional PCA representations of the ICSD (a random 20\% subsample is used for visual clarity) with clusters found with $K$-means clustering. Coloured according to clusters found using $K$-means clustering on that data with $K=5$ (i.e. the same point may not be the same colour between subfigures) (a)~Data was represented using $compVec$ with no RBF application. (b)~Data was represented using $compVec$ with RBF application. (c)~Data was represented using magpie with no RBF application. (d)~Data was represented using magpie with RBF application.}
    \label{fig:PCA}
\end{figure}
\subsection{Example basic use of cluster isotropy measurements for data science application}\label{MNISTsection}
\begin{figure}
    \centering
    \includegraphics[width=\textwidth]{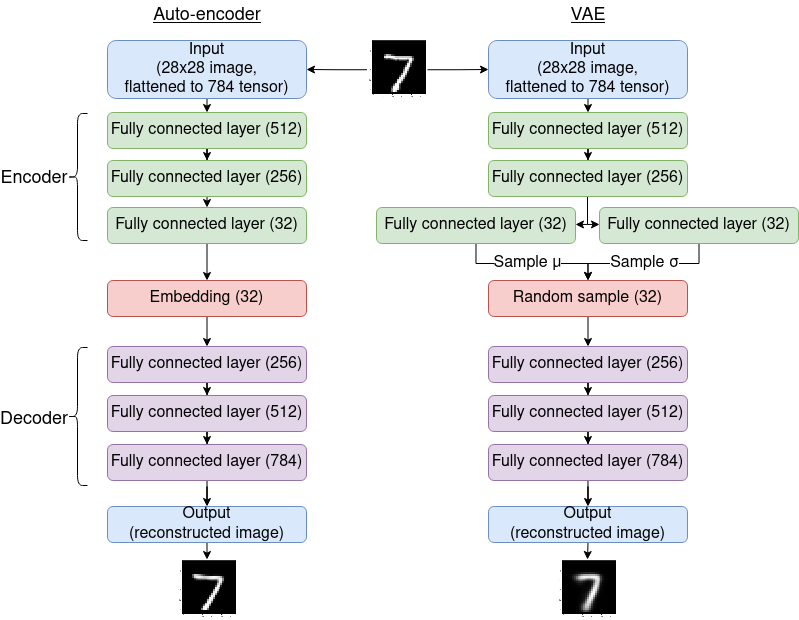}
    \caption{Layout of auto-encoder and variational auto-encoder (VAE) used in the example discussed in ~\ref{MNISTsection}. Embeddings shown in red here are those seen plotted in~\Cref{fig:mnist_embeddings} and measured in~\Cref{tab:MNISTMeasures}.}
    \label{fig:model_layout}
\end{figure}
\begin{figure}
    \centering
    \begin{subfigure}{0.48\linewidth}
        \includegraphics[width=\linewidth,trim=40 40 10 0,clip]{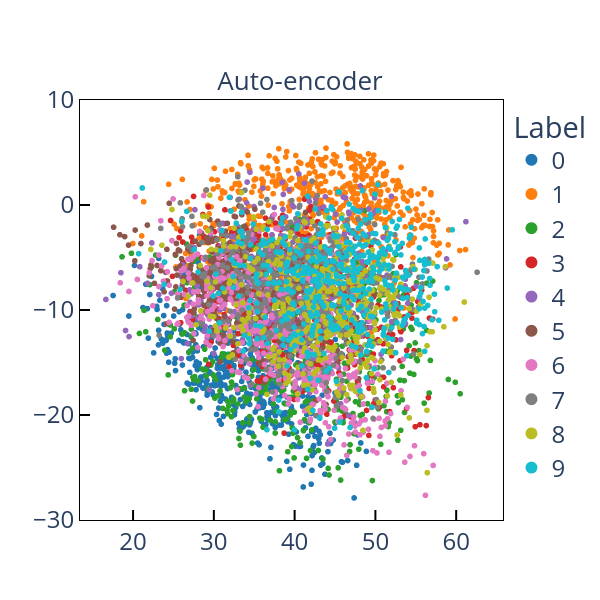}
        \caption{\centering}
    \end{subfigure}
    \hfill
    \begin{subfigure}{0.48\linewidth}
        \includegraphics[width=\linewidth,trim=40 40 10 0,clip]{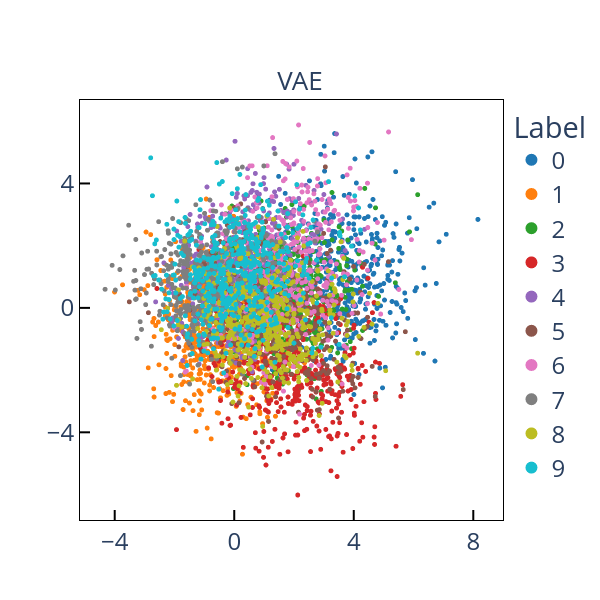}
        \caption{\centering}
    \end{subfigure}
    
    \caption{PCA transformations of latent space embeddings of MNIST training set. A random 50\% of the training set is shown for visual clarity. (a) Autoencoder embedding (b) Variational autoencoder embedding}
    \label{fig:mnist_embeddings}
\end{figure}
Examining learnt embeddings (also called learnt representations) is common in understanding how deep learning algorithms interpret input data~\cite{mat2Vec}. However, this is often qualitative analysis, based 2D or 3D projections of the embeddings which by their nature remove some of the information present in higher dimensions. 

Cluster isotropy measurements can be used to quantify differences between embeddings without dimensionality reduction. This can be used in conjunction with qualitative analysis to judge differences between embeddings produced by deep learning models.

In order to demonstrate the broader applicability of measures of cluster isotropy, we present quantitative analysis of learnt embeddings. Two models were trained on a basic computer vision dataset, the Modified National Institute of Standards and Technology dataset (MNIST)~\cite{mnist}. The MNIST is a widely used dataset of 70,000, 28x28 pixel images, each depicting a handwritten number between 0 and 9. A common ML introductory task is to create a model that can correctly classify these images. 

We train two types of models to create low dimensional embeddings MNIST images, and compare their embeddings of unseen test data using two different models. The exact mechanics of these models is not vital to understand the novel analysis of isotropy of clusters which is presented here. We do not aim to present a novel model in this paper; instead we use isotropy of clusters to analyse models' outputs in an interesting way. Regardless a brief overview is of these models given.

Auto-encoders are neural network models that learn a representation of a dataset by passing each dataset through a neural network with an information bottleneck before trying to reconstruct the input data point~\cite{autoencoder}. The section of the model up to and including this bottleneck is called the encoder, and the section after the bottleneck is called the decoder (as they, respectively, encode and decode a learnt representation). Once trained, output of the bottleneck layer can be used as a lower dimensional representation of the input (also referred to as a learnt or latent space). Feeding random noise to a trained auto-encoder should generate an output which is similar to existing data points, but is completely fictional. However, in practise, when used in this generative fashion, auto-encoders can give outputs identical to received training data. 

Variational auto-encoders (VAEs) aim to be a more useful generative model by introducing randomness into the encoder~\cite{VAE_review}. This forces the model to learn the distribution of points in a dataset, rather than the points themselves. As a result, a VAE tends to generate a data point which is harder to map on directly onto a data point in the training set. In order to stop overfitting when learning the distribution of datapoints a further term, Kullback–Leibler divergence (KL divergence) is introduced into the loss function~\cite{kl_divergance}. The KL divergence measures the distance between the learnt distribution and either the true distribution or the a distribution chosen a priori (often the normal distribution).

An auto-encoder and a VAE~(\Cref{fig:model_layout}), were trained for 100 epochs on the MNIST dataset. Each features 3 fully connected layers in both the encoder and the decoder, however, with rectified linear units (ReLUs) after each layer to provide non-linearity (with the exception of the final layer which is followed by a sigmoid function). The KL divergence in the VAE was measured between the learnt distribution and a normal distribution of mean 0 and standard deviation 1.

The encoder of each network produces embeddings of size 32 for an input image. The latent space of these training set can be projected into 2 dimensions using principal component analysis (PCA) to allow for inspection (\Cref{fig:mnist_embeddings}). 

On visual inspection it is difficult to discern differences between these latent spaces. Measurements of $\mathrm{FA}_g$, $I_{g,\mathrm{vec}}$, and $I_{g,\mathrm{rnd}}$ allow us to quantitatively say that images of the same label form more isotropic clusters when embedded with the VAE than when embedded with the auto-encoder~(\Cref{tab:MNISTMeasures}). All three metrics for isotropy of clusters conclude that VAE's result in more isotropic clusters. FA$_g$ of the VAE's is measured as almost half that of the AutoEncoder, suggesting a very large change in isotropy. $I_{g,\mathrm{rnd}}$, and $I_{g,\mathrm{vec}}$ suggest a much smaller (but consistent) change in isotropy. However, as we discuss later (\Cref{discussion}) this seems to be indicative of how these metrics perform, with changes of $\sim$0.001 being notable despite only being a very small part of the domain of $I_{g,\mathrm{rnd}}$ and $I_{g,\mathrm{vec}}$ (which is 0, 1). 

Conclusions as to the difference between these latent spaces can also be drawn from other internal cluster validation metrics. The Calinski--Harabasz measure suggests that clusters in the VAE's embeddings are more dispersed than those of the Auto-Encoder. This is in line with the silhouette score, which suggests that points embedded with VAE's are closer to points in other clusters than those embedded with auto-encoders. Davies--Bouldin also suggests more poorly separated clusters in the VAE embeddings. 

Existing internal cluster validation metrics all suggest a worse separation between classes in VAE embeddings compared to Auto-Encoder embeddings. This makes sense when considering how VAEs work; Gaussian noise is inherent to the model. For generative models such as these, isotropy in the embeddings has been linked to good generative performance~\cite{isotropy}. Thus, measurements of isotropy for sets of clusters are pertinent.

Using existing internal cluster validation metrics may suggest to a researcher that VAE's are worse for class separation than Auto-Encoders. Measuring the isotropy of embeddings (and knowing that isotropy has been associated with good generative performance) gives a more nuanced picture. When observed with other internal clustering validation measures, metrics for the isotropy of clusters allows someone developing models such as these to gain an intuitive understanding of the latent spaces produced by them.

\begin{table}[]
\caption{Metric measurements of the MNIST test set embedded with auto-encoder and VAE}
\label{tab:MNISTMeasures}   
    \centering

\begin{tabular}{c c c}
\hline\hline
    Metric & Auto-encoder  & VAE  \\\hline \hline
     $I_{g,\mathrm{vec}}$ & 0.991 & 0.998 \\
     $I_{g,\mathrm{rnd}}$ & 0.942 & 0.984 \\
     $\mathrm{Var}(\lambda)$ & $4.70\times10^{-3}$ & $2.76\times10^{-4}$\\
     fractional anisotropy &  0.901 & 0.451 \\
     Calinski--Harabasz & 531 & 161\\
     Davies--Bouldin &  2.57 & 7.03 \\
     Silhouette & 0.091 & 0.011 \\
     
     \hline\hline
\end{tabular}

\end{table}
\subsection{Examining differences between measurements of isotropy using random Gaussian point clouds}\label{dummy_data_experiments}
\begin{figure}
    \begin{subfigure}{0.45\linewidth}
    \centering
    \includegraphics[width=\linewidth]{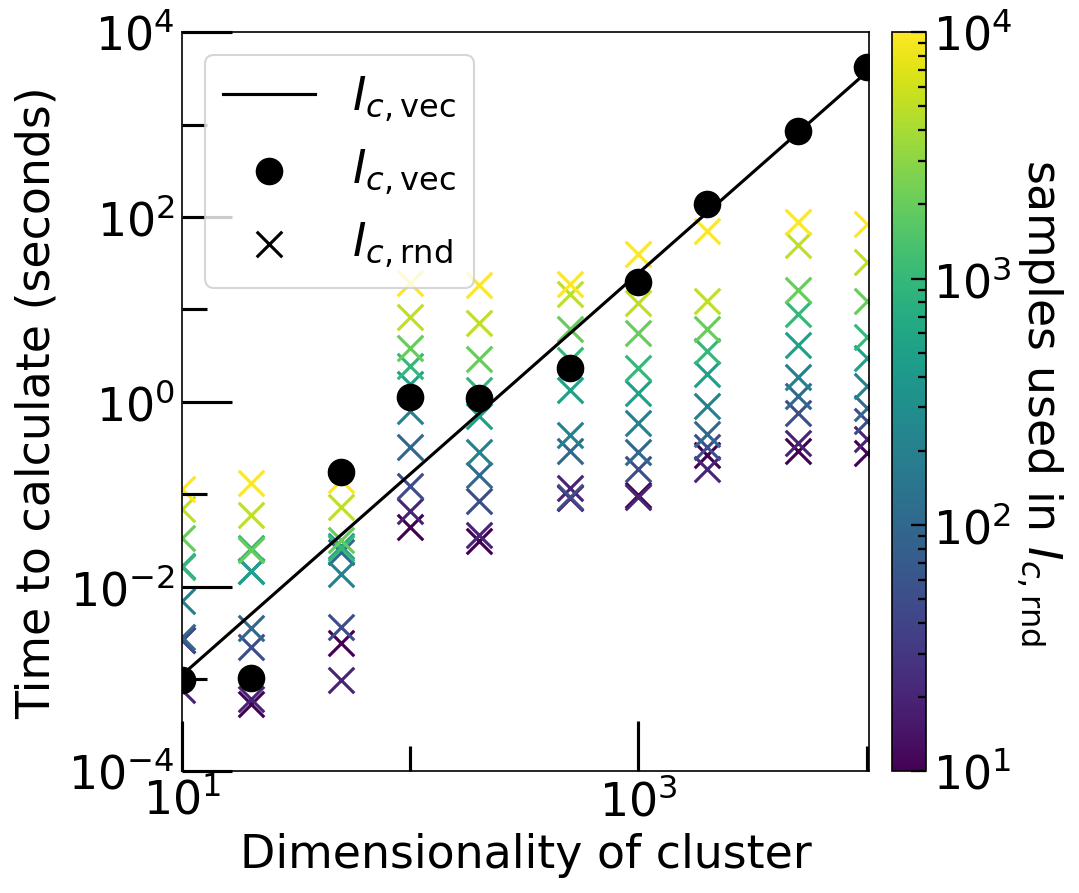}
    \caption{\centering}
    \label{fig:timeVsDims}
\end{subfigure}
    \centering
    \begin{subfigure}{0.46\linewidth}
        \includegraphics[width=\linewidth]{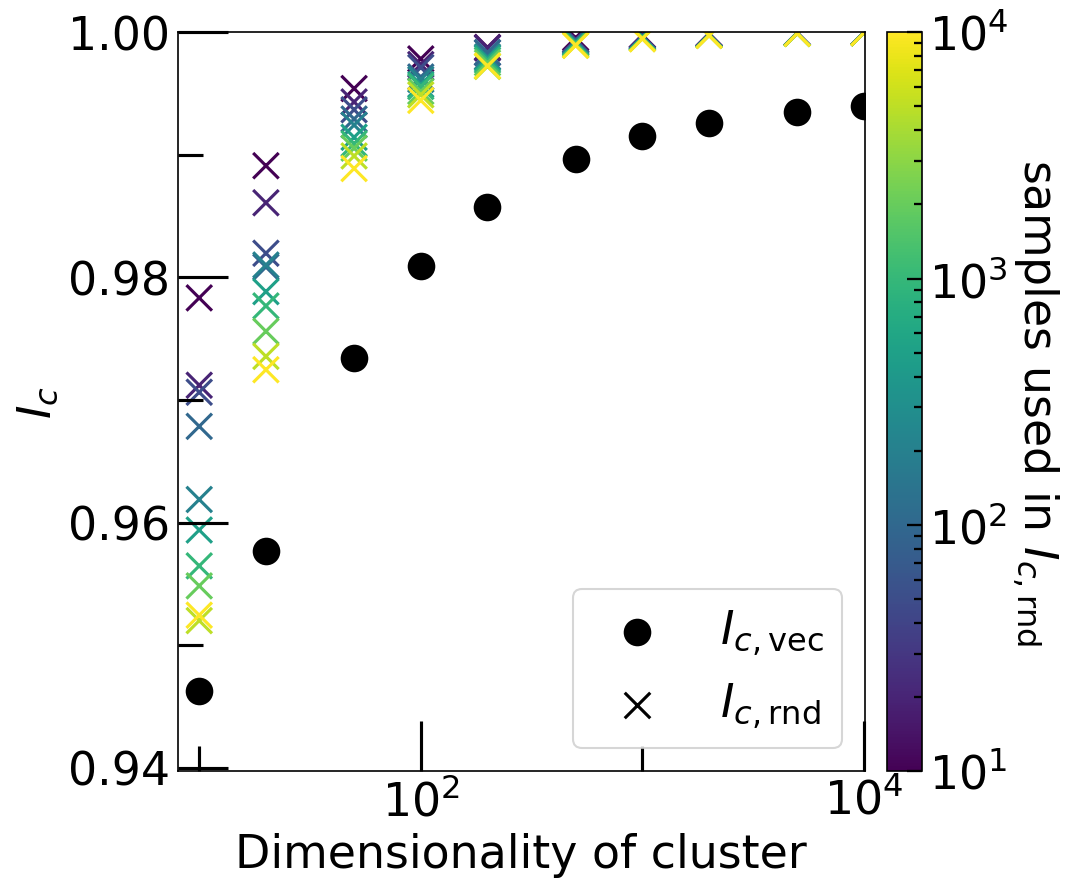}
        \caption{\centering}
        \label{fig:dimVsPerformance}
    \end{subfigure}
    
    \caption{Measurements of $I_c$ using $I_{c,\mathrm{vec}}$ and $I_{c_rnd}$, across a random Gaussian cluster of different dimensions. This was repeated for 10 different random clusters of 100 points, with the mean results shown here (a) In higher ($>10^3$) dimensions, $I_{c,\mathrm{vec}}$ becomes very expensive to compute, the computational complexity $I_{c,\mathrm{rnd}}$ scales with the number of unit samples taken. (b) When the cluster is 10 or fewer dimensions, $I_{c,\mathrm{rnd}}$ is a more accurate, with $I_{c,\mathrm{vec}}$ performing better in higher dimensions. As expected (Equation~\ref{eq:I_c_rnd_lim_I_c_true}), $I_{c,\mathrm{rnd}}$ measurements are more accurate when more unit samples are used, though this effect is less noticeable in higher dimensions. Regardless of the number of unit samples used, across all dimensions all measurements of $I_{c,\mathrm{vec}}$ and $I_{c,\mathrm{rnd}}$ are within 10\% of each other.}
    \label{fig:dummyDataGraphs}
\end{figure}

\begin{figure}
\centering
    \begin{subfigure}{0.45\linewidth}
        \includegraphics[width=\linewidth]{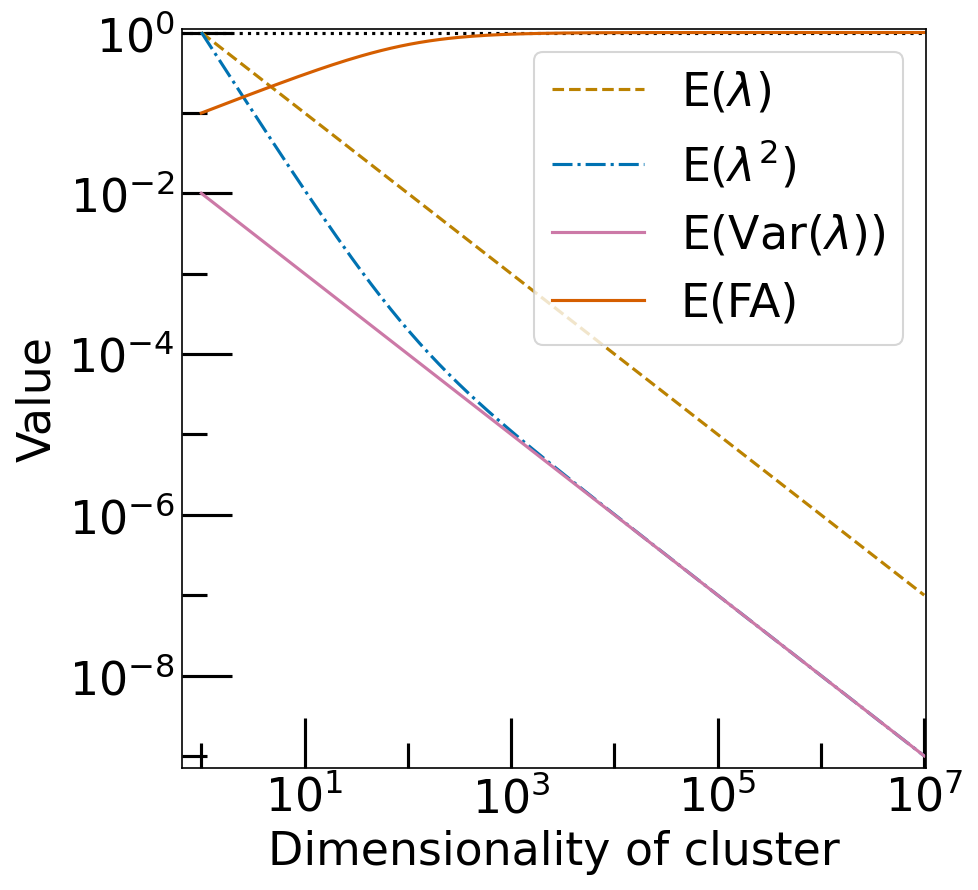}
        \caption{\centering}
        \label{fig:MP_dists}
    \end{subfigure}
    \begin{subfigure}{0.45\linewidth}
    \centering
    \includegraphics[width=\linewidth]{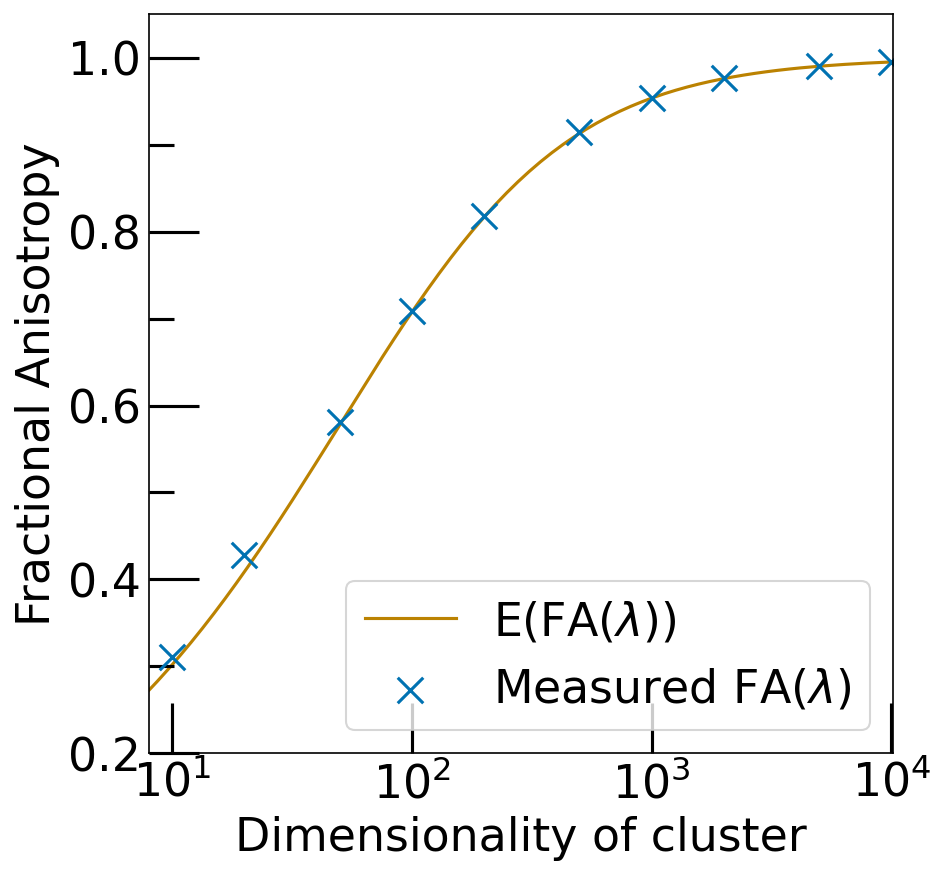}
    \caption{\centering}
    \label{fig:FAvsDims}
\end{subfigure}
    
      \begin{subfigure}{0.45\linewidth}
        \includegraphics[width=\linewidth]{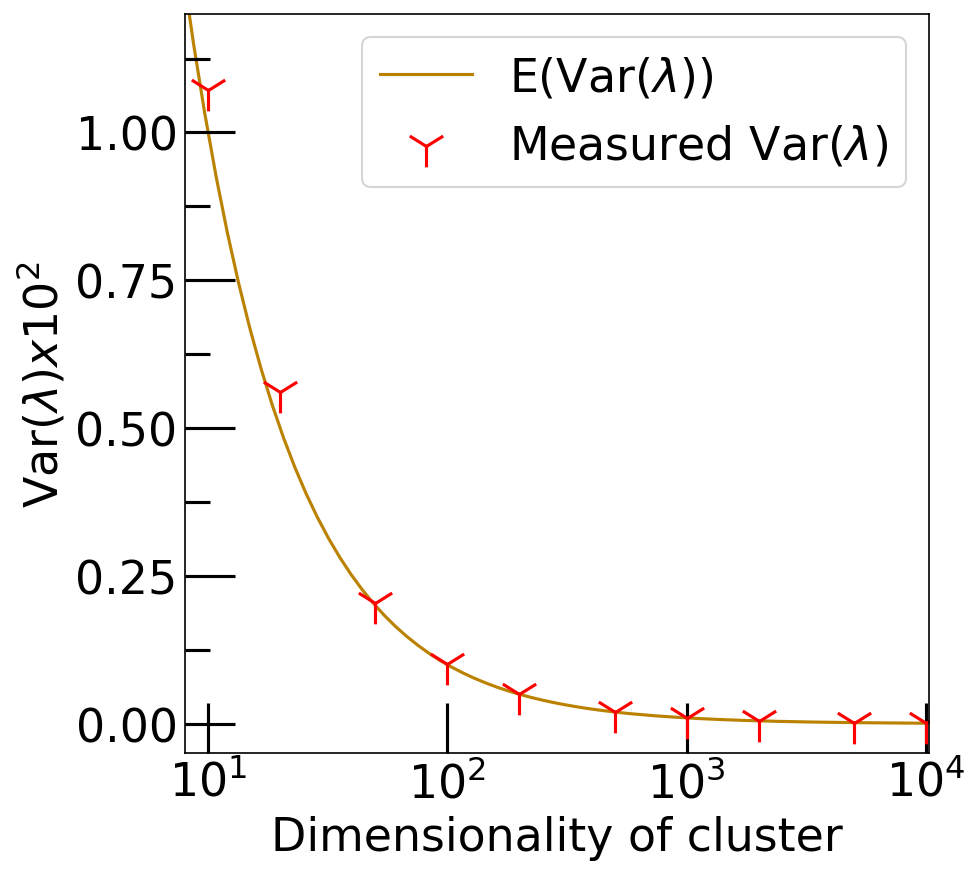}
        \caption{\centering}
        \label{fig:var_lambdavs_dims}
    \end{subfigure}
    \caption{(a) The expected value of fractional anisotropy (FA) for clusters of 100 points in different dimensionalities. $\lambda$ represents the normalised eigenvalues for each cluster (Equation 3.1). The coordinates of each point in each cluster are sampled from the normal distribution $\sim\mathcal{N}(0,1)$. 
    This $E$(FA) is calculated using the Marchenko--Pastur distribution. $E(\mathrm{FA})$ and $E(\mathrm{Var}(\lambda))$ change with the dimensionality of a cluster. Consequently, clusters of different dimensionality cannot be compared using these measures. 
    When comparing high dimensional clusters, $\mathrm{FA}$ and $\mathrm{Var}(\lambda)$ may lead to counterintuitive results, particularly when data are noisy. 
    (b) Measurements of FA compared to the expected value modelled with the Marchenko--Pastur distribution. 
    Random matrix theory successfully describes the behaviour of this measure. Points represent the mean FA of 10 different random clusters of 100 points. 
    (c) Measurements of Var($\lambda$) compared to the expected value as modelled with the Marchenko--Pastur distribution. Points represent the mean Var($\lambda$) of 10 different random clusters of 100 points.}
    \label{fig:dummyDataGraphs_fa}
\end{figure}

In experiments presented thus far, isotropy of sets of clusters have been measured in two different situations (\textit{i.e.}, Auto-encoder vs VAE, RBF vs No RBF). In both experiments, all measures of isotropy have been consistent as to which of the two situations results in more isotropic clusters (\Cref{tab:ICSDMeasures,tab:MNISTMeasures}) (note that a $FA$ is expected to have an inverse relationship to $I_c$). However, it has not been clear which approximation of $I_{c}$ was more accurate and if there were any advantages or disadvantages to approximating $I_{c}$ as opposed to using FA.

To further explore the differences between the approximations of $I_c$ and FA, the isotropy of randomly generated clusters of different dimensionalities were measured. Measurements of $I_{c,\mathrm{vec}}$, $I_{c,\mathrm{rnd}}$, and FA were taken to explore how the accuracy, and time complexity of each algorithm varies with dimensionality. 

Clusters of 100 points were generated, with each point having coordinates sampled from a Gaussian distribution of mean 0 and standard deviation 1. Clusters were generated between 10 and 10,000 dimensions and the $I_{c,\mathrm{vec}}$, $I_{c,\mathrm{rnd}}$, and FA were measured, with the number of random unit vectors used in $I_{c_rnd}$ varying between 10 and 10,000. For each dimensionality, 10 different clusters were measured; we report the mean results here.

The computational times of $I_{c,\mathrm{vec}}$ and FA are approximately exponential to the dimensionality of the cluster (\Cref{fig:timeVsDims}). As the computational time of $I_{c,\mathrm{rnd}}$ varies with the number random unit vectors used, $I_{c,\mathrm{rnd}}$ often performs faster (particularly for high dimensional clusters). This is generally in line with the complexity calculated for these algorithms (\Cref{tab:metricComparison}).

In line with expectations (Equation~\ref{eq:I_c_rnd_lim_I_c_true}), as more random samples are used, $I_{c,\mathrm{rnd}}$ becomes more accurate (\Cref{fig:dimVsPerformance}). Even with the highest number of unit samples $I_{c,\mathrm{rnd}}$ was never measured as being more accurate than $I_{c,\mathrm{vec}}$ for these Gaussian clusters. Both $I_{c,\mathrm{rnd}}$ and $I_{c,\mathrm{vec}}$ measurements increased slightly with dimensionality. This is inverse to the relationship found with FA (\Cref{fig:FAvsDims}), which would imply that in high dimensions Gaussian clusters are highly anisotropic. The reason for FA's observed increase in this experiment can be found in the field of random matrix theory, and are explored below (\Cref{random_matrix_theory})

\subsubsection{Why does fractional anisotropy of a Gaussian cluster increase with dimensionality?}\label{random_matrix_theory}

Gaussian clusters are observed to have a high FA (and thus be considered very anisotropic) in higher dimensions (\Cref{fig:FAvsDims}). If FA is highly correlated with with dimensionality, does this mean FA is not applicable in higher dimensions? In order to answer this, we calculate how the expected value of FA for a Gaussian cluster varies with dimensionality of that cluster.

To examine the expected value of FA of a point cloud with coordinates sampled from the Gaussian distribution, one can consider this $n$ dimensional point cloud of $T$ points to be represented as a matrix size $n \times T$. The eigenvalues for the principal components of this random matrix can be described using a probability density function. Assuming the variance of our Gaussian distribution, $\sigma^2$, to be finite, the probability density function of the eigenvalues, $\Lambda$, is given by the Marchenko--Pastur distribution~\cite{MarcenkoPastur}:

\begin{eqnarray}\mathrm{PDF}_{\mu,\sigma^2}(\Lambda_x)=\frac{1}{2\pi\sigma^2\Lambda}\sqrt{(\Lambda_{\max}-\Lambda)(\Lambda-\Lambda_{\min})}\end{eqnarray}

where $\lambda_{\max}$ and $\lambda_{\min}$ are respectively the largest and smallest possible eigenvalues of the distribution, given by:

\begin{eqnarray}\Lambda_{\max,\min}\approx\sigma^2\left(1\pm\sqrt{\frac{T}{n}}\right)^2+\mu\end{eqnarray}

Where, $\mu$ is the mean of the distribution and $\sigma^2$ is the variance. To solve this for the parameters of our experiment ($\mu = 0, T=100, \sigma=1$) it can be said:

\begin{eqnarray}\Lambda_{\max,\min}=\left(1\pm\frac{10}{\sqrt{n}}\right)^2\end{eqnarray}

In order to find the expected value for the E(FA), we can calculate $\mathrm{E}(\Lambda^2)$, and $\mathrm{E}(\Lambda)^2$, the Marchenko--Pastur distribution can be integrated: 

.\begin{eqnarray}E(\Lambda)=\int_{\Lambda_{\min}}^{\Lambda_{\max}}\Lambda \mathrm{PDF}_{\mu,\sigma^2}(\Lambda) d\Lambda\end{eqnarray}

Thus, in our case:

\begin{eqnarray}E(\Lambda) &=\int_{\Lambda_{\min}}^{\Lambda_{\max}}\Lambda \frac{1}{2\pi\sigma^2\Lambda}\sqrt{(\Lambda_{\max}-\Lambda)(\Lambda-\Lambda_{\min})} d\Lambda \\[0.15cm]
&= \int_{\Lambda_{\min}}^{\Lambda_{\max}} \frac{1}{2\pi}\sqrt{(\Lambda_{\max}-\Lambda)(\Lambda-\Lambda_{\min})} d\Lambda \end{eqnarray}

Similarly we can say 

\begin{eqnarray}E(\Lambda^2)= \int_{\Lambda_{\min}}^{\Lambda_{\max}} \frac{\Lambda}{2\pi}\sqrt{(\Lambda_{\max}-\Lambda)(\Lambda-\Lambda_{\min})} d\Lambda \end{eqnarray}

FA is calculated with the normalised eigenvalues, $\lambda$ (Equation~\ref{eq:normalised_lambda}), and we can say:
\begin{eqnarray}
E(\lambda) = \frac{E(\Lambda)}{T}
\end{eqnarray}
and 
\begin{eqnarray}
E(\lambda^2) = \frac{E(\Lambda^2)}{T^2}
\end{eqnarray}
Note that when substituting these into the equation for FA($\lambda$) (Equation~\ref{eq:FA_deffinition}), the denominators cancel out, thus $E(\mathrm{FA}(\lambda$)) is equal to $E(\mathrm{FA}(\Lambda$)): 
\begin{eqnarray}
E(\mathrm{FA}(\lambda)) &=
\sqrt{1-\frac{E(\lambda)^2}{E(\lambda^2)}} \\[0.15cm]
&= \sqrt{1-\frac{ \left( E(\Lambda)/T \right)^2}{E(\Lambda^2)/T^2}} \\[0.15cm]
&= \sqrt{1-\frac{E(\Lambda)^2/T^2}{E(\Lambda^2)/T^2}} \\[0.15cm]
&= \sqrt{1-\frac{E(\Lambda)^2}{E(\Lambda^2)}}\\[0.15cm]
&= E(\mathrm{FA}(\Lambda))\end{eqnarray}

Plotting E($\lambda$) against E$(\lambda^2)$ shows that they converge in high dimensions, and thus FA converges to 1 (\Cref{fig:MP_dists}). The expectation of FA to measure Gaussian clusters as being anisotropic in high dimensions was reflected in our measurements (\Cref{fig:FAvsDims}). A similar process can be followed for Var($\lambda$), showing that Var($\lambda$) trends towards 0 in high dimensions for Gaussian clusters (\Cref{fig:var_lambdavs_dims}).

The implication of this proof is that comparing FAs for clusters that are not represented in the same number of dimensions may not be applicable. This is particularly true where clusters are noisy and thus will have Eigenvectors expected to more closely follow the Marchenko--Pastur distribution. As will be discussed (\Cref{discussion}) this does not mean that FA is not a useful tool, but that this result should be taken into consideration, for example, by avoiding the use of noisy data or checking conclusions drawn from use of FA agree with those that would be drawn from the use of $I_{c,\mathrm{vec}}$ or $I_{c,\mathrm{rnd}}$.

\section{Discussion}\label{discussion}
Analysis of isotropy of sets of clusters has used in two different settings to provide quantitative evidence that enhances observations of low dimensional PCA projections. The study of isotropy of a set of clusters has highlighted non-intuitive results (\textit{e.g.}, that RBF approximations will result in changes in isotropy that depend on the sparsity of an input representation), and helped interpretation of learnt embeddings, where low-dimensional projections were unclear. 

In real world examples seen here, measurements for isotropy have agreed in all instances. All measurements for isotropy investigated have a bounded output domain, and they are invariant under linear isometries, as well as under uniform scaling. Thus, any of Var($\lambda$), FA$_g$, $I_{g,\mathrm{vec}}$ or $I_{g,\mathrm{rnd}}$ can prove useful, and will likely summon similar conclusions for data. 

One way in which these measures differed is in the use of their output domain. As previously noted while Var($\lambda$) has a theoretical output domain of 0, 0.25, in practice the highest Var($\lambda$) observed was 0.033 in synthetic data (\Cref{fig:var_lambdavs_dims}) or 0.011 in real data. As such, and because of its lack of appearance in the literature, Var($\lambda$) not the focus of this study. $I_{g,\mathrm{rnd}}$ and $I_{g,\mathrm{vec}}$ also use a small amount of their output domain (0, 1) in the examples measured; in most cases seen $I_{c}>0.9$. 

In contrast to $I_{c,\mathrm{rnd}}$, $I_{c,\mathrm{vec}}$ and Var($\lambda$), FA is observed across most of its output domain (\Cref{tab:MNISTMeasures,fig:dummyDataGraphs_fa}). However, when measuring a Gaussian cluster, the nature of random matrices means that high dimensional measurements of FA will seem anisotropic (\Cref{random_matrix_theory}). While this does not universally preclude the applicability of FA in high-dimensional clusters, where data are noisy, this may provide a measurement that does not align with intuitive understandings of isotropy.

One intuition about isotropy, which is common in the existing literature is the use of principal components to identify axis of anisotropy~\cite{isotropy,Fractional_Anisoptry}. The implementation introduced here, $I_{c,\mathrm{rnd}}$ does not follow that intuition, yet still performs well as a measure of isotropy. In Gaussian clusters $I_{c,\mathrm{rnd}}$ is a less accurate approximation of $I_{c,\mathrm{true}}$ than $I_{c,\mathrm{vec}}$. However, in both real-world and synthetic settings, examples have been observed in which $I_{c,\mathrm{rnd}}$ is more accurate than $I_{c,\mathrm{vec}}$ (\Cref{fig:two_clusters,tab:MNISTMeasures}). While differences between sizes of principal components can be indicative of anisotropy, a lack of data between these principal components can also contribute to anisotropy (\textit{e.g.},~\cref{fig:two_clusters}). By prioritising measurements on principal axes, existing measures deprioritise the latter of these causes of anisotropy, which can lead to unintuitive results.

Some aspects of whether a cluster of points is isotropic can be subjective (as discussed in the Introduction). We do not argue that any of the metrics for isotropy examined here are the best, instead examining their differences and observing cases which may give unexpected results (\textit{e.g.}, FA being sensitive to noise in high dimensions). Describing the shape of a cluster in a single number will by necessity remove detail and is no substitute for visual inspection. This is even more the case when trying to describe sets of clusters. 

However, when working with unlabelled data (as is often the case in materials science), visual inspection may prove impractical. Clusters may be hard to visually separate, or there may be too many to feasibly inspect each of them. Thus, while simplistic, the tools explored here may be helpful in a number of settings.


In materials science, the introduction of new clustering techniques~\cite{ILS}, provides exciting ways to apply clustering algorithms in ways which suit the unique aspects of materials science data. However, if these clustering algorithms are dependant on distance metrics such as the Euclidean distance, then they are also dependant on materials representation. As there is no definitive representation of a material, it is important to have metrics to analyse resulting clusterings. Where no target labels exist, the metrics available for such analysis are limited. The isotropy of clusters is linked to downstream performance benefits~\cite{isotropy}, and is particularly pertinent for analysing different representations. Dominant features, or correlations of features, may not be apparent in lower dimensions (\Cref{fig:PCA}) and identifying how such correlations may affect clustering is important when drawing conclusions from data or choosing representations of data.

\section{Conclusion}
Analysing sets of clusters is a common task both in the materials science and broader data science domains. While metrics (referred to as internal cluster validation metrics) exist to analyse the compactness and separability of unlabelled clusters, analysis of the shape of clusters has till now been qualitative. 

This research offers a thorough exploration of metrics for isotropy (\textit{i.e.} spikiness) of a cluster of points. One such metric, FA was demonstrated on higher dimensional data. Through use of theorems from the field of random matrix theory~\cite{MarcenkoPastur}, we demonstrated that in high dimensions this measure is susceptible to giving more anisotropic results than expected for noisy data.

A separate measure for isotropy of a point cloud, $I_{c,\mathrm{vec}}$, was examined. An alternative implementation, $I_{c,\mathrm{rnd}}$ was proposed based on the existing derivation of $I_{c,\mathrm{vec}}$. The differences between $I_{c,\mathrm{vec}}$ and $I_{c,\mathrm{rnd}}$ were discussed. Neither $I_{c,\mathrm{rnd}}$, nor $I_{c,\mathrm{vec}}$ was seen to be universally more accurate or faster to compute. However, for high dimensional data, $I_{c,\mathrm{rnd}}$ is orders of magnitude faster to compute than $I_{c,\mathrm{vec}}$

$I_{c,\mathrm{rnd}}$, $I_{c,\mathrm{vec}}$, FA and a basic proxy for isotropy, Var($\lambda$), were generalised to measure isotropy unlabelled clusters of data. We demonstrate two real-world applications of these generalisations: One in the materials informatics domain, and one in a broader data science domain. 

In materials science, understanding the output to clustering is particularly important. Datasets are often heterogeneous, and clustering algorithms tend to perform poorly. We demonstrate the usefulness of isotropy measures for clusters of data by exploring clusters found in the ICSD, a canonical materials science dataset. Previous research qualitatively described these clusters using low dimensional visualisations, we quantify these findings by measuring isotropy numerically.

Broader data science applicability of these measures are demonstrated on by analysing learnt representations of a fundamental data science dataset (MNIST). Isotropy measures for sets of clusters allow for more thorough exploration and description of these representations than would otherwise be possible. 

Internal cluster validation measures are helpful tools for chemists and data scientists to understand and quantify unlabelled sets of clusters. Isotropy is pertinent to machine learning for materials science as appropriate material representation is often unclear, and anisotropy in a cluster can be indicative of dominant features in a representation of a material. 

 We provide implementations of these metrics in the associate code repository~\cite{isotropy_git}. The metrics for isotropy presented here are a helpful addition to existing metrics, which allow researchers to richly explore their datasets.

\section*{Acknowledgements}
We thank the Leverhulme Trust for funding this work via the Leverhulme Research Centre for Functional Materials Design. MWG thanks the Ramsay Memorial Fellowships Trust for funding through a Ramsay Trust Memorial Fellowship.
\section*{Conflict of interest}
There are no competing interests to declare.
\section*{References}
\bibliographystyle{iopart-num}
\bibliography{sample}

\providecommand{\newblock}{}
\begin{thebibliography}{10}
\expandafter\ifx\csname url\endcsname\relax
  \def\url#1{{\tt #1}}\fi
\expandafter\ifx\csname urlprefix\endcsname\relax\def\urlprefix{URL }\fi
\providecommand{\eprint}[2][]{\url{#2}}

\bibitem{Durdy2022}
Durdy S, Gaultois M~W, Gusev V~V, Bollegala D and Rosseinsky M~J 2022 {\em
  Digital Discovery\/} {\bf 1}(6) 763--778

\bibitem{Roter2022}
Roter B, Ninkovic N and Dordevic S 2022 {\em Physica C: Superconductivity and
  its Applications\/} {\bf 598} 1354078

\bibitem{Zhang2019}
Zhang Y, He X, Chen Z, Bai Q, Nolan A~M, Roberts C~A, Banerjee D, Matsunaga T,
  Mo Y and Ling C 2019 {\em Nature Communications\/} {\bf 10} 5260

\bibitem{Murdock}
Murdock R~J, Kauwe S~K, Wang A~Y~T and Sparks T~D 2020 {\em Integrating
  Materials and Manufacturing Innovation\/} {\bf 9} 221--227

\bibitem{ICSD}
Levin I 2020 {\em NIST Inorganic Crystal Structure Database (ICSD)\/} (National
  Institute of Standards and Technology)

\bibitem{matbench}
Dunn A, Wang Q, Ganose A, Dopp D and Jain A 2020 {\em npj Computational
  Materials\/} {\bf 6} 138

\bibitem{materialsProject}
Jain A, Ong S~P, Hautier G, Chen W, Richards W~D, Dacek S, Cholia S, Gunter D,
  Skinner D, Ceder G and Persson K~A 2013 {\em APL Materials\/} {\bf 1} 011002

\bibitem{ILS}
Parker A~J and Barnard A~S 2019 {\em Advanced Theory and Simulations\/} {\bf
  2}(12) 1900145

\bibitem{ILS1}
Parker A~J, Opletal G and Barnard A~S 2020 {\em Journal of Applied Physics\/}
  {\bf 128}(1) 014301

\bibitem{ILS2}
Motevalli B, Parker A~J, Sun B and Barnard A~S 2019 {\em Nano Futures\/} {\bf
  3}(4) 045001

\bibitem{ILS3}
Bhat N, Barnard A~S and Birbilis N 2023 {\em Royal Society Open Science\/} {\bf
  10} 220360

\bibitem{kmeans}
Lloyd S~P 1982 {\em IEEE Transactions on Information Theory\/} {\bf 28}(2)
  129--137

\bibitem{DBSCAN}
Ester M, Kriegel H~P, Sander J and Xu X 1996 A density-based algorithm for
  discovering clusters in large spatial databases with noise {\em Proceedings
  of the Second International Conference on Knowledge Discovery and Data
  Mining\/} KDD'96 (AAAI Press)

\bibitem{wardClustering}
Ward J~H 1963 {\em Journal of the American Statistical Association\/} {\bf 58}
  236--244

\bibitem{Liu2010}
Liu Y, Li Z, Xiong H, Gao X and Wu J 2010 {Understanding of internal clustering
  validation measures} {\em Proceedings - IEEE International Conference on Data
  Mining, ICDM\/} pp 911--916

\bibitem{isotropy}
Mu J and Viswanath P 2018 All-but-the-top: Simple and effective postprocessing
  for word representations {\em International Conference on Learning
  Representations\/}

\bibitem{PCA}
Pearson K 1901 {\em The London, Edinburgh, and Dublin Philosophical Magazine
  and Journal of Science\/} {\bf 2} 559--572

\bibitem{TSNE}
van~der Maaten L and Hinton G 2008 {\em Journal of Machine Learning Research\/}
  {\bf 9} 2579--2605

\bibitem{fractional_Anisometry_book}
Ellingson B~M and Cohen-Adad J 2014 {\em Chapter 3.1 - Diffusion-Weighted
  Imaging of the Spinal Cord\/} (Academic Press)

\bibitem{ElMD}
Hargreaves C~J, Dyer M~S, Gaultois M~W, Kurlin V~A and Rosseinsky M~J 2020 {\em
  Chemistry of Materials\/} {\bf 32} 10610--10620

\bibitem{mat2Vec}
Tshitoyan V, Dagdelen J, Weston L, Dunn A, Rong Z, Kononova O, Persson K~A,
  Ceder G and Jain A 2019 {\em Nature\/} {\bf 571} 95--98

\bibitem{PCA_projected}
Liu X, Wong D, Liu Y, Chao L, Xiao T and Zhu J 2019 Shared-private bilingual
  word embeddings for neural machine translation {\em Proceedings of the 57th
  Annual Meeting of the Association for Computational Linguistics\/}

\bibitem{mnist}
LeCun Y, Cortes C and Burges C 2010 {\em ATT Labs\/} {\bf 2}

\bibitem{MarcenkoPastur}
Marčenko V~A and Pastur L~A 1967 {\em Mathematics of the USSR-Sbornik\/} {\bf
  1} 457

\bibitem{Fractional_Anisoptry}
Basser P~J and Pierpaoli C 1996 {\em Journal of Magnetic Resonance B\/} {\bf
  111} 209--219

\bibitem{adjusted_mutual_info}
Vinh N~X, Epps J and Bailey J 2010 {\em Journal of Machine Learning Research\/}
  {\bf 11} 2837--2854

\bibitem{Fowlkes1983}
Fowlkes E~B and Mallows C~L 1983 {\em Journal of the American Statistical
  Association\/} {\bf 78} 553--569

\bibitem{homogeniety}
Rosenberg A and Hirschberg J 2007 {{V}-Measure: A Conditional Entropy-Based
  External Cluster Evaluation Measure} {\em Proceedings of the 2007 Joint
  Conference on Empirical Methods in Natural Language Processing and
  Computational Natural Language Learning\/} (Prague, Czech Republic:
  Association for Computational Linguistics)

\bibitem{silhouetteScore}
Rousseeuw P~J 1987 {\em Journal of Computational and Applied Mathematics\/}
  {\bf 20} 53--65

\bibitem{Davies1979}
L D~D and W B~D 1979 {\em IEEE Transactions on Pattern Analysis and Machine
  Intelligence\/} {\bf PAMI-1} 224--227

\bibitem{Calinski1974}
Cali{\~{n}}ski T and Harabasz J 1974 {\em Communications in Statistics\/} {\bf
  3} 1--27

\bibitem{LOCOCV}
Meredig B, Antono E, Church C, Hutchinson M, Ling J, Paradiso S, Blaiszik B,
  Foster I, Gibbons B, Hattrick-Simpers J, Mehta A and Ward L 2018 {\em
  Molecular Systems Design and Engineering\/} {\bf 3} 819--825

\bibitem{Zmeasure}
Arora S, Li Y, Liang Y, Ma T and Risteski A 2016 {\em Transactions of the
  Association for Computational Linguistics\/} {\bf 4} 385--399

\bibitem{hierarchicalclustering}
Manning C~D, Raghavan P and Schütze H 2008 {\em Introduction to Information
  Retrieval\/} (Cambridge University Press)

\bibitem{isotropic_measures}
Bogdan K, Grzywny T and Ryznar M 2015 {\em Probability Theory and Related
  Fields\/} {\bf 162} 155--198

\bibitem{Alexander2007}
Alexander A~L, Lee J~E, Lazar M and Field A~S 2007 {\em Neurotherapeutics\/}
  {\bf 4}(3) 316--329

\bibitem{randomKitchenSinks}
Rahimi A and Recht B 2008 {\em Advances in Neural Information Processing
  Systems\/} {\bf 21}

\bibitem{elemnet}
Jha D, Ward L, Paul A, keng Liao W, Choudhary A, Wolverton C and Agrawal A 2018
  {\em Scientific Reports\/} {\bf 8} 1--13

\bibitem{Ward2016}
Ward L, Agrawal A, Choudhary A and Wolverton C 2016 {\em npj Computational
  Materials\/} {\bf 2} 1--7

\bibitem{primerOnKernels}
Vert J, Tsuda K and Schölkopf B 2004 {\em Kernel Methods in Computational
  Biology\/}  35--70

\bibitem{Ritter1989}
Ritter H and Kohonen T 1989 {\em Biological Cybernetics 1989 61:4\/} {\bf 61}
  241--254

\bibitem{Kaski1998}
Kaski S 1998 {\em IEEE International Conference on Neural Networks - Conference
  Proceedings\/} {\bf 1} 413--418

\bibitem{autoencoder}
Kramer M~A 1991 {\em Aiche Journal\/} {\bf 37} 233--243

\bibitem{VAE_review}
Tschannen M, Bachem O~F and Lučić M 2018 Recent advances in autoencoder-based
  representation learning {\em Bayesian Deep Learning Workshop, NeurIPS\/}

\bibitem{kl_divergance}
Kullback S and Leibler R~A 1951 {\em The Annals of Mathematical Statistics\/}
  {\bf 22} 79 -- 86

\bibitem{isotropy_git}
Durdy S 2023 Python implementations of isotropy measurements
  \url{https://github.com/lrcfmd/Isotropy_measurements.git}

\end{thebibliography}
\appendix

\renewcommand{\theequation}{A.\arabic{equation}}
\section*{Appendix}
\newtheorem{theorem}{Theorem}

\begin{theorem}\label{thm:varianceProof}
For a finite sized set, $\lambda$, of real numbers between 0 and 1, the variance of $\lambda$ is bounded between 0 and 0.25: $$0\le Var(\lambda)\le 0.25$$
\end{theorem}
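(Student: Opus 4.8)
The plan is to prove both bounds directly. The lower bound $\mathrm{Var}(\lambda)\ge 0$ is immediate, since variance is an average of squared deviations from the mean and hence non-negative; this requires no hypothesis on the $\lambda_i$. The content is therefore entirely in the upper bound $\mathrm{Var}(\lambda)\le 1/4$, and here the hypothesis that each $\lambda_i\in[0,1]$ is essential.

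For the upper bound, I would use the standard identity $\mathrm{Var}(\lambda)=\mathrm{E}(\lambda^2)-\mathrm{E}(\lambda)^2$. Writing $m=\mathrm{E}(\lambda)$ for the mean, the key observation is that for any $x\in[0,1]$ we have $x^2\le x$, because $x^2 - x = x(x-1)\le 0$ on that interval. Averaging this inequality over the finite set gives $\mathrm{E}(\lambda^2)\le \mathrm{E}(\lambda)=m$. Hence
\begin{eqnarray}
\mathrm{Var}(\lambda)=\mathrm{E}(\lambda^2)-m^2\le m-m^2 = m(1-m).
\end{eqnarray}
Since $\lambda_i\in[0,1]$, the mean $m$ also lies in $[0,1]$, and the quadratic $m\mapsto m(1-m)$ attains its maximum value $1/4$ at $m=1/2$. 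Therefore $\mathrm{Var}(\lambda)\le 1/4$, which completes the proof.

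There is no real obstacle here; the only thing to be careful about is which ``variance'' convention is used (population variance with denominator $|\lambda|$ versus sample variance with denominator $|\lambda|-1$). The bound $1/4$ as stated matches the population-variance convention $\mathrm{Var}(\lambda)=\tfrac{1}{|\lambda|}\sum_i(\lambda_i-m)^2$, which is the one consistent with the identity $\mathrm{Var}(\lambda)=\mathrm{E}(\lambda^2)-\mathrm{E}(\lambda)^2$ used above and with the definition $\hat{\lambda}=(\sum_i\lambda_i)/n$ appearing earlier in the paper; I would simply note this. An alternative, more ``combinatorial'' route to the same bound is to observe that for fixed mean the variance of values confined to $[0,1]$ is maximised by a two-point distribution putting mass on the endpoints $0$ and $1$, for which the variance is exactly $m(1-m)\le 1/4$; but the $x^2\le x$ argument is shorter and self-contained, so that is the one I would present.
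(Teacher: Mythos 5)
Your proposal is correct and follows essentially the same route as the paper's proof: the paper's expansion $n\cdot\mathrm{Var}(\lambda)=\sum_i\lambda_i^2-n\bar{\lambda}^2$ is exactly the identity $\mathrm{Var}(\lambda)=\mathrm{E}(\lambda^2)-\mathrm{E}(\lambda)^2$ you invoke, and both arguments hinge on $\lambda_i^2\le\lambda_i$ on $[0,1]$ followed by maximising $\bar{\lambda}-\bar{\lambda}^2$ at $\bar{\lambda}=1/2$. Your explicit remark that the bound uses the population-variance convention (and that $x^2\le x$ needs both $x\ge0$ and $x\le1$) is a small point of added care over the paper's version, but the substance is identical.
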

\begin{proof}[Proof of \ref{thm:varianceProof}]
Let $\lambda={\lambda_1,\lambda_2,\dots,\lambda_n}$ be a set of real numbers such that $0 \leq \lambda_i \leq 1$ for all $i$. Let $\bar{\lambda}=\frac{1}{n}\sum_{i=1}^n \lambda_i$ be the mean of the set $\lambda$. Then, we have:

\begin{eqnarray}
\mathrm{Var}(\lambda) = \frac{1}{n}\sum_{i=1}^n \left(\lambda_i-\bar{\lambda}\right)^2 
\end{eqnarray}
For all values of $i$, $\left(\lambda_i-\bar{\lambda}\right)^2 \ge 0$. Thus, the lower bound for Var($\lambda$) is 0.

\par To prove the upper bound of Var($\lambda$), observe that as $\lambda_i \le 1$ it can be said that:
\begin{eqnarray}
\sum_{i=1}^n \lambda_i^2 \le \sum_{i=1}^n \lambda_i 
\end{eqnarray}
As $n\bar{\lambda} = \sum_{i=1}^n \lambda_i $:
\begin{eqnarray}\label{eq:sum_lambdai_sq}
\sum_{i=1}^n \lambda_i^2 \le n\bar{\lambda}     
\end{eqnarray}
And so:
\begin{eqnarray}
n \cdot \mathrm{Var}(\lambda) &= \sum^n_{i=1} \left(\lambda_i-\bar{\lambda}\right)^2  \\[0.15cm]
&= \sum^n_{i=1} \left(\lambda^2_i - 2\lambda_i\bar{\lambda} +\bar{\lambda}^2\right)   \\[0.15cm]
&= \sum^n_{i=1} \lambda^2_i - 2\bar{\lambda} \sum^n_{i=1} \lambda_i + n\bar{\lambda}^2\\[0.15cm]
&= \sum^n_{i=1} \lambda^2_i - 2\bar{\lambda} \cdot n \bar{\lambda} + n\bar{\lambda}^2\\[0.15cm]
&= \sum^n_{i=1} \lambda^2_i - n\bar{\lambda}^2
\end{eqnarray}
Thus, as per Equation~\ref{eq:sum_lambdai_sq}:
\begin{eqnarray}
n \cdot \mathrm{Var}(\lambda)  \leq n\bar{\lambda} - n\bar{\lambda}^2
\end{eqnarray}
Thus: 
\begin{eqnarray}
\mathrm{Var}(\lambda)  \leq \bar{\lambda}-\bar{\lambda}^2
\end{eqnarray}
The maximum value of $\bar{\lambda} - \bar{\lambda}^2$ occurs when $\bar{\lambda} = 0.5$  and so
\begin{eqnarray}
\mathrm{Var}(\lambda) &\leq 0.5 - 0.5^2\\[0.15cm]
&\leq 0.25
\end{eqnarray}
Thus, the variance $\lambda$ is bounded [0, 0.25].

\end{proof}

\begin{theorem}\label{thm:invarianceProof}
The ratio of minimal and maximal values Z($\mathbf{a}$) for any unit vector $\mathbf{a}$:
\begin{eqnarray}\label{eq:I_variant_ratio}
\frac{\min_{|\mathbf{a}|=1}Z(\mathbf{a})}{\max_{|\mathbf{a}|=1}Z(\mathbf{a})}
\end{eqnarray}
is not invariant to uniform scaling or linear isometries.
\end{theorem}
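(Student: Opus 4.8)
The plan is to refute the asserted invariance by exhibiting an explicit cluster $\mathcal{C}$ together with explicit transformations under which the ratio in \Cref{eq:I_variant_ratio} takes different values. Since the claim of invariance is a universal statement (over all clusters and all such transformations), a single counterexample suffices; and it is enough to treat scalings and translations separately, because orthogonal maps (rotations and reflections) leave the ratio unchanged by the change of variable $\mathbf{a}\mapsto R^\intercal\mathbf{a}$ in the $\inf$/$\sup$ over the unit sphere.

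First I would take the simplest possible cluster, a single point $\mathcal{C}=\{\mathbf{p}\}$ with $\mathbf{p}\neq\mathbf{0}$, so that $Z(\mathbf{a})=\exp(\mathbf{a}^\intercal\mathbf{p})$. Because $x\mapsto e^x$ is strictly increasing, the extrema of $Z$ over the unit sphere are controlled by the extrema of the linear functional $\mathbf{a}\mapsto\mathbf{a}^\intercal\mathbf{p}$; by Cauchy--Schwarz this functional ranges over $[-|\mathbf{p}|,\,|\mathbf{p}|]$, the endpoints being attained at $\mathbf{a}=\pm\mathbf{p}/|\mathbf{p}|$ (the unit sphere is compact, so the extrema are attained). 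Hence $\min_{|\mathbf{a}|=1}Z(\mathbf{a})=e^{-|\mathbf{p}|}$, $\max_{|\mathbf{a}|=1}Z(\mathbf{a})=e^{|\mathbf{p}|}$, and the ratio equals $e^{-2|\mathbf{p}|}$.

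Next I would apply the two transformation families. Uniform scaling by $c>0$ replaces $\mathcal{C}$ by $\{c\mathbf{p}\}$, and the same computation gives ratio $e^{-2c|\mathbf{p}|}$, which differs from $e^{-2|\mathbf{p}|}$ for every $c\neq 1$: the ratio is not scale invariant. Translation by $\mathbf{t}$ replaces $\mathcal{C}$ by $\{\mathbf{p}+\mathbf{t}\}$ with ratio $e^{-2|\mathbf{p}+\mathbf{t}|}$; choosing $\mathbf{t}$ with $|\mathbf{p}+\mathbf{t}|\neq|\mathbf{p}|$ (e.g.\ $\mathbf{t}=\mathbf{p}$, giving $e^{-4|\mathbf{p}|}$) shows it is not translation invariant. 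Taking instead the centred cluster $\mathcal{C}=\{\mathbf{0}\}$, whose ratio is $1$, and translating it to $\{\mathbf{t}\}$ with $\mathbf{t}\neq\mathbf{0}$ yields ratio $e^{-2|\mathbf{t}|}<1$, which makes precise the paper's remark that moving a cluster away from the origin strictly decreases the measure. For completeness I would note the same effect on a non-degenerate cluster, the symmetric pair $\mathcal{C}=\{\mathbf{p},-\mathbf{p}\}$, where $Z(\mathbf{a})=2\cosh(\mathbf{a}^\intercal\mathbf{p})$ gives $\min Z=2$, $\max Z=2\cosh(|\mathbf{p}|)$, hence ratio $1/\cosh(|\mathbf{p}|)$, again depending on $|\mathbf{p}|$ and changing under a translation along $\mathbf{p}$.

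There is no substantive technical obstacle; the only points needing a little care are (i) the justification that the extrema of $Z$ over the unit sphere are attained and coincide with the extrema of the linear functional $\mathbf{a}\mapsto\mathbf{a}^\intercal\mathbf{p}$ (compactness together with monotonicity of $\exp$), and (ii) the terminological matter that translations are not linear maps in the strict sense but are among the operations the paper groups under ``linear isometries,'' so I would present the scaling counterexample and the translation counterexample explicitly, and remark that orthogonal maps genuinely do preserve the ratio, so that the scope of the non-invariance is stated unambiguously.
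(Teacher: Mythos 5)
Your proposal is correct, and it takes a genuinely different --- and in fact more rigorous --- route than the paper. The paper argues in the abstract: it writes $Z(\mathbf{a})$ for the transformed cluster $\alpha\mathcal{C}+\beta$ as $\sum_{\mathbf{d}\in\mathcal{C}}\exp\left(\mathbf{a}^\intercal(\alpha\mathbf{d}+\beta)\right)$ and concludes that because ``neither $\alpha$ nor $\beta$ can be factored out'' the ratio must change. That inference is not airtight on its own: failure of a constant to factor out does not by itself imply the min/max ratio changes (indeed, for an orthogonal map $R$ nothing ``factors out'' either, yet, as you correctly observe, the ratio is preserved by the substitution $\mathbf{a}\mapsto R^\intercal\mathbf{a}$ over the sphere). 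Your approach instead exhibits explicit counterexamples: for $\mathcal{C}=\{\mathbf{p}\}$ the ratio is $e^{-2|\mathbf{p}|}$, which becomes $e^{-2c|\mathbf{p}|}$ under scaling by $c$ and $e^{-2|\mathbf{p}+\mathbf{t}|}$ under translation by $\mathbf{t}$, and these computations are fully justified by Cauchy--Schwarz plus monotonicity of $\exp$. Since non-invariance is a negated universal, one counterexample per transformation family suffices, so your argument is complete where the paper's is only a sketch; you also delimit the scope of the result more precisely by noting that genuine linear isometries (orthogonal maps) do preserve the ratio, so the failure is really about translations and scalings. The only nitpick: in your symmetric-pair example $\{\mathbf{p},-\mathbf{p}\}$, the value $\min Z=2$ requires a unit vector orthogonal to $\mathbf{p}$ and hence ambient dimension at least $2$; in one dimension that example degenerates to ratio $1$. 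This does not affect your main counterexamples.
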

\begin{proof}[Proof of \ref{thm:invarianceProof}]
Consider a cluster of points, $\mathcal{C}$, let $\mathcal{C}'=\alpha \mathcal{C} +\beta$ where $\alpha$ and $\beta$ are a scalar and a translation vector, respectively. Were Equation~\ref{eq:I_variant_ratio} invariant under uniform scaling and linear isomoteries, then the ratio for $\mathcal{C}'$ would be the same as that of $\mathcal{C}$. The value of $Z(\mathbf{a})$ for $\mathcal{C}'$ would be:
    \begin{eqnarray}
    \sum_{\mathbf{d} \in \mathcal{C}}\exp\left(\mathbf{a}^\intercal (\alpha \mathbf{d}+\mathbf{\beta})\right)\end{eqnarray}
Neither $\alpha$ nor $\beta$ can be factored out. Thus, the value of Equation~\ref{eq:I_variant_ratio} would change. Therefore, Equation~\ref{eq:I_variant_ratio} is not invariant under uniform scaling or linear isometries.
\end{proof}

\begin{theorem}\label{thm:upperBoundProof}
Any $I_{c|\mathcal{B}}$ is an upper bound for $I_{c,\mathrm{true}}$:
\begin{eqnarray}\forall_{c,\mathcal{B}}: I_{c,\mathrm{true}}\le I_{c|\mathcal{B}}\end{eqnarray}
\end{theorem}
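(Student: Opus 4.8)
The plan is to reduce the statement to the elementary observation that taking the minimum of a strictly positive function over a subset can only increase it, while taking the maximum over a subset can only decrease it. Write $S = \{\mathbf{a} : |\mathbf{a}| = 1\}$ for the unit sphere, so that by construction every admissible $\mathcal{B}$ satisfies $\mathcal{B} \subseteq S$ and $\mathcal{B} \neq \varnothing$. Then $\min_{\mathbf{b}\in\mathcal{B}} Z'(\mathbf{b}) \ge \min_{\mathbf{a}\in S} Z'(\mathbf{a})$ and $\max_{\mathbf{b}\in\mathcal{B}} Z'(\mathbf{b}) \le \max_{\mathbf{a}\in S} Z'(\mathbf{a})$, and the whole proof is assembling these two facts into a statement about the ratio.

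First I would record that $Z'(\mathbf{a})$, as defined in Equation~\ref{eq:ZPrime}, is a finite sum of strictly positive exponential terms, hence $Z'(\mathbf{a}) > 0$ for every unit vector $\mathbf{a}$ (the cluster $\mathcal{C}$ being nonempty). This ensures that the four extrema entering the definitions of $I_{c,\mathrm{true}}$ (Equation~\ref{eq:I_c_true}) and $I_{c|\mathcal{B}}$ (Equation~\ref{eq:I_c_given_B}) are all strictly positive, so both ratios are well defined and the ordering of numerators and denominators transfers faithfully to the ratios (larger numerator, smaller denominator yields a larger fraction).

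Then I would combine the two subset inequalities: since $0 < \min_{\mathbf{a}\in S} Z'(\mathbf{a}) \le \min_{\mathbf{b}\in\mathcal{B}} Z'(\mathbf{b})$ and $0 < \max_{\mathbf{b}\in\mathcal{B}} Z'(\mathbf{b}) \le \max_{\mathbf{a}\in S} Z'(\mathbf{a})$, dividing gives
\[ I_{c|\mathcal{B}}(\mathcal{C}) = \frac{\min_{\mathbf{b}\in\mathcal{B}} Z'(\mathbf{b})}{\max_{\mathbf{b}\in\mathcal{B}} Z'(\mathbf{b})} \;\ge\; \frac{\min_{\mathbf{a}\in S} Z'(\mathbf{a})}{\max_{\mathbf{a}\in S} Z'(\mathbf{a})} = I_{c,\mathrm{true}}, \]
and since $\mathcal{C}$ and $\mathcal{B}$ were arbitrary this discharges the universal quantifier $\forall_{c,\mathcal{B}}$.

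The only point requiring genuine care is whether the extrema over the infinite set $S$ are actually attained; I would handle this by invoking continuity of $\mathbf{a} \mapsto Z'(\mathbf{a})$ together with compactness of $S$, so that $\min_S Z'$ and $\max_S Z'$ exist as honest minima and maxima (if one preferred not to appeal to compactness, replacing them by $\inf$ and $\sup$ throughout leaves the argument untouched). A trivial edge case worth one sentence is $|\mathcal{B}| = 1$, where $I_{c|\mathcal{B}} = 1$ and the bound holds simply because $I_{c,\mathrm{true}} \le 1$ always.
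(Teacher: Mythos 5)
Your proof is correct and follows essentially the same route as the paper's: both rest on the observation that restricting to a subset $\mathcal{B}$ of the unit sphere can only raise the minimum and lower the maximum of $Z'$, and that positivity of $Z'$ lets these inequalities pass to the ratio. Your direct version is in fact slightly cleaner than the paper's contrapositive phrasing, and your attention to positivity and to attainment of the extrema (compactness/continuity, or falling back to $\inf$/$\sup$) fills in details the paper leaves implicit.
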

\begin{proof}[Proof of \ref{thm:upperBoundProof}]
Consider the formula for $I_{c,\mathrm{true}}$:
\begin{eqnarray}
    I_{c,\mathrm{true}} = \frac{\min_{|\mathbf{a}|=1}Z'(\mathbf{a})}{\max_{|\mathbf{a}|=1}Z'(\mathbf{a})}
\end{eqnarray}
and the formula for $I_{c|\mathcal{B}}$:
\begin{eqnarray}
        I_{c|\mathcal{B}}(\mathcal{C})\approx \frac{\min_{\mathbf{b}\in \mathcal{B}}  Z'(\mathbf{b})}{\max_{\mathbf{b}\in \mathcal{B}}  Z'(\mathbf{b})}
\end{eqnarray}
where $\mathcal{B}$ is a set of unit vectors. $\mathcal{B}$ is a subset of the set of all unit vectors, $|\mathbf{a}|=1$. For it to be the case that $I_{c|\mathcal{B}} > I_{c,\mathrm{true}}$, one or both of the following must be true:
\begin{eqnarray}\min_{|\mathbf{a}|=1}Z'(\mathbf{a})>\min_{\mathbf{b}\in \mathcal{B}}Z'(\mathbf{b})\end{eqnarray}
and/or:
\begin{eqnarray}
\max_{|\mathbf{a}|=1}Z'(\mathbf{a})<\max_{\mathbf{b}\in \mathcal{B}}Z'(\mathbf{b})
\end{eqnarray}
However, as $\forall_{\mathbf{b}\in \mathcal{B}}: |\mathbf{b}| = 1$ neither of these statements can be true. Thus \begin{eqnarray}\forall_{c,\mathcal{B}}: I_{c,\mathrm{true}}\le I_{c|\mathcal{B}}\end{eqnarray}
\end{proof}

\begin{theorem}\label{thm:I_c_rnd_lim}
As the size of the set of random unit vectors used approaches infinity, $I_{c,\mathrm{rnd}}$ approaches $I_{c,\mathrm{true}}$:
\begin{eqnarray}\lim_{|r|\to \infty} I_{c,\mathrm{rnd}} = I_{c,\mathrm{true}}\end{eqnarray}

\end{theorem}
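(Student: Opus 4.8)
The plan is to show that the finite random estimator converges almost surely (and hence in probability) to $I_{c,\mathrm{true}}$ as the number of sampled unit vectors grows, using a standard density/uniform-sampling argument. First I would fix a cluster $\mathcal{C}$ and note that the map $\mathbf{a}\mapsto Z'(\mathbf{a})$ is continuous on the unit sphere $S^{n-1}$ (it is a finite sum of exponentials of a linear functional of $\mathbf{a}$), and that $S^{n-1}$ is compact, so both $\min_{|\mathbf{a}|=1}Z'(\mathbf{a})$ and $\max_{|\mathbf{a}|=1}Z'(\mathbf{a})$ are attained, at points I will call $\mathbf{a}_{\min}$ and $\mathbf{a}_{\max}$. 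The denominator is bounded below by a strictly positive constant (each summand is positive), so the ratio defining $I_{c,\mathrm{true}}$ is well defined and continuous in the pair $(\min,\max)$.

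Next I would use the assumption that $r$ is sampled from a distribution with full support on $S^{n-1}$ (e.g. the uniform distribution, which is the natural choice and matches the implementation). For any $\varepsilon>0$, continuity gives a neighbourhood of $\mathbf{a}_{\min}$ on which $Z'$ is within $\varepsilon$ of its minimum; this neighbourhood has positive measure, so by the second Borel--Cantelli lemma (independent samples, summable complement probabilities fail, so infinitely many hits occur almost surely) the sampled set $r$ eventually contains a point in it. Hence $\min_{\mathbf{b}\in r}Z'(\mathbf{b})\to \min_{|\mathbf{a}|=1}Z'(\mathbf{a})$ almost surely as $|r|\to\infty$; the same argument applied near $\mathbf{a}_{\max}$ gives $\max_{\mathbf{b}\in r}Z'(\mathbf{b})\to \max_{|\mathbf{a}|=1}Z'(\mathbf{a})$. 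Combining these via the continuous mapping theorem (the ratio is continuous where the denominator is positive, which holds throughout) yields $I_{c,\mathrm{rnd}}\to I_{c,\mathrm{true}}$. One should also observe monotonicity: by \Cref{thm:upperBoundProof} every finite sample gives an upper bound $I_{c,\mathrm{rnd}}\ge I_{c,\mathrm{true}}$, and enlarging $r$ can only shrink the $\min$ and enlarge the $\max$, so the estimator decreases monotonically toward its limit, which both reinforces convergence and clarifies the sense in which more samples help.

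A subtlety worth flagging is that Equation~\ref{eq:I_c_rnd} as written uses $Z$ rather than $Z'$; I would either treat this as a typo and work with $Z'$ (the invariant version, consistent with $I_{c,\mathrm{true}}$ in Equation~\ref{eq:I_c_true}), or note that the argument is identical for either function since both are continuous and positive on $S^{n-1}$. The main obstacle is not any single hard step but making the sampling hypothesis explicit and rigorous: the statement ``$r$ is a random set of unit vectors'' must be pinned down to a distribution with positive mass on every open subset of the sphere for the Borel--Cantelli step to go through. Once that is granted, the remainder is a routine compactness-plus-continuity argument, and the only care needed is to keep the denominator bounded away from zero so the ratio behaves well in the limit.
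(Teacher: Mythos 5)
Your proposal is correct, and it is in fact substantially more rigorous than the paper's own argument. The paper reasons by assigning each unit vector a probability $P(\mathbf{b}) = 1/|(|\mathbf{a}|=1)|$, treating the set of all unit vectors as if it had a meaningful cardinality to divide by, and then concludes that $P(\mathbf{b}\in r)\to 1$ and that $r$ ``tends to'' the whole sphere; since the sphere is uncountable, every individual vector has probability zero under the uniform measure and this argument is only heuristic. You instead use the standard and correct machinery: continuity of $Z'$ on the compact sphere $S^{n-1}$ guarantees the extrema are attained; full support of the sampling distribution plus Borel--Cantelli guarantees the sample eventually enters any $\varepsilon$-neighbourhood of the extremizers; and positivity of $Z'$ keeps the ratio well behaved, giving almost-sure convergence. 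Your observation that the estimator decreases monotonically to its limit (via \Cref{thm:upperBoundProof}) is a useful strengthening the paper does not state, and you are right that the sampling hypothesis (``a random set of unit vectors'') must be pinned down to a distribution with positive mass on every open subset of the sphere for the argument to close --- the paper implicitly assumes uniform sampling but never says so. You are also correct that Equation~\ref{eq:I_c_rnd} writes $Z$ where $Z'$ is intended; the argument is insensitive to this, but the convergence target only matches $I_{c,\mathrm{true}}$ as defined in Equation~\ref{eq:I_c_true} if the invariant $Z'$ is used.
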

\begin{proof}[Proof of \ref{thm:I_c_rnd_lim}]
As the number of unit vectors in $r$ approaches infinity, the probability that a random unit vector is in $r$ approaches 1.

Let $\mathbf{b}$  be a unit vector. The probability of choosing $\mathbf{b}$ from the uniform distribution of the set of all unit vectors is:

\begin{eqnarray}P(\mathbf{b}) = \frac{1}{|(|\mathbf{a}| = 1)|}\end{eqnarray}

where $|(|\mathbf{a}|=1)|$ is the cardinality of the set of all unit vectors. If $r$ is sampled uniformly from the set $|\mathbf{a}| = 1$ then:
\begin{eqnarray}P(\mathbf{b}\in r) = \frac{|r|}{|(|\mathbf{a}| = 1)|}
\end{eqnarray}
Therefore:
\begin{eqnarray}\lim_{|r|\to \infty} P(\mathbf{b}\in r)  \rightarrow 1 \end{eqnarray}
and 
\begin{eqnarray}\lim_{|r|\to \infty} r \rightarrow |\mathbf{a}| = 1 \end{eqnarray}
Consequently:
\begin{eqnarray}\lim_{|r|\to \infty} I_{c,\mathrm{rnd}} = I_{c,\mathrm{true}}\end{eqnarray}
\end{proof}

\end{document}